\theoremstyle{plain}
\newtheorem{theorem}{Theorem}[section]
\newtheorem{proposition}[theorem]{Proposition}
\newtheorem{lemma}[theorem]{Lemma}
\theoremstyle{definition}
\newtheorem{definition}[theorem]{Definition}
\theoremstyle{remark}
\newcommand\E{\mathbb{E}}
\newcommand\R{\mathbb{R}}
\newcommand\X{\mathcal{X}}
\newcommand\A{\mathcal{A}}
\newcommand\M{\mathcal{M}}
\renewcommand\S{\mathcal{S}}
\newcommand\F{\mathcal{F}}
\newcommand\D{\mathcal{D}}
\newcommand{\dotprod}[1]{\langle #1\rangle} 
\newcommand{\norm}[1]{\big\lVert #1 \big\rVert} 
\newcommand{\lnorm}[1]{\lVert #1 \rVert}
\newcommand{\argmin}{\mathop{\mathrm{argmin}}}  
\newcommand{\argmax}{\mathop{\mathrm{argmax}}}
\renewcommand\thanks[1]{\footnotemark\protected@xdef\@thanks{\@thanks
\protect\footnotetext[\the\c@footnote]{#1}}}
	\crefname{hyp}{}{assumption}
	\Crefname{hyp}{}{Assumption}
\renewcommand\H{\mathcal{H}} 
\title{Learning mirror maps in policy mirror descent}
\author{%
  Carlo Alfano\thanks{Correspondence to carlo.alfano@stats.ox.ac.uk.}\\
  Department of Statistics\\
  University of Oxford
  \And
  Sebastian Towers\thanks{Equal contribution.} \\
  FLAIR \\
  University of Oxford
  \And
  Silvia Sapora\footnotemark[2] \\
  FLAIR, Department of Statistics \\
  University of Oxford
  \AND
  Chris Lu \\
  FLAIR \\
  University of Oxford
  \And
  Patrick Rebeschini \\
  Department of Statistics \\
  University of Oxford
}
\begin{document}

\maketitle
\begin{abstract}
\looseness = -1
Policy Mirror Descent (PMD) is a popular framework in reinforcement learning, serving as a unifying perspective that encompasses numerous algorithms. These algorithms are derived through the selection of a mirror map and enjoy finite-time convergence guarantees. Despite its popularity, the exploration of PMD's full potential is limited, with the majority of research focusing on a particular mirror map---namely, the negative entropy---which gives rise to the renowned Natural Policy Gradient (NPG) method. It remains uncertain from existing theoretical studies whether the choice of mirror map significantly influences PMD's efficacy. In our work, we conduct empirical investigations to show that the conventional mirror map choice (NPG) often yields less-than-optimal outcomes across several standard benchmark environments. Using evolutionary strategies, we identify more efficient mirror maps that enhance the performance of PMD. We first focus on a tabular environment, i.e.\ Grid-World, where we relate existing theoretical bounds with the performance of PMD for a few standard mirror maps and the learned one. We then show that it is possible to learn a mirror map that outperforms the negative entropy in more complex environments, such as the MinAtar suite. Additionally, we demonstrate that the learned mirror maps generalize effectively to different tasks by testing each map across various other environments.
\end{abstract}

\section{Introduction}
Policy gradient (PG) methods~\citep{williams1991function,RN158,konda2000actor,baxter2001infinite} are some of the most widely-used mechanisms for policy optimization in reinforcement learning (RL). These algorithms are gradient-based methods that optimize over a class of parameterized policies and have become a popular choice for RL problems, both in theory~\citep{RN159,RN179,RN180,trpo,volodymyr2016asynchronous,schulman2017proximal,RN270} and in practice~\citep{RN142, berner2019dota, ouyang2022training}. \looseness=-1

Among PG methods, some of the most successful algorithms are those that employ some form of regularization in their updates, ensuring that the newly updated policy retains some degree of similarity to its predecessor. This principle has been implemented in different ways. For instance, trust region policy optimization (TRPO) \cite{trpo} imposes a Kullback-Leibler divergence \cite{kullback1951information} hard constraint for its updates, while proximal policy optimization (PPO)~\citep{schulman2017proximal} uses a clipped objective to penalize large updates. A framework that has recently attracted attention and belongs to this heuristic is that of policy mirror descent (PMD)~\citep{tomar2022mirror,RN270,xiao2022convergence,kuba2022mirror,vaswani2022general, alfano2023novel}, which applies mirror descent~\cite{nemirovski1983problem} to RL to regularize the policy updates. \looseness = -1

PMD consists of a wide class of algorithms, each derived by selecting a \emph{mirror map} that introduces distinct regularization characteristics. In recent years, PMD has been investigated through numerical experiments~\citep{tomar2022mirror}, but it has mainly been analysed from a theoretical perspective. To the best of our knowledge, research has been mostly focused either on the particular case of the negative entropy mirror map, which generates the natural policy gradient (NPG) algorithm~\citep{RN159, agarwal2021theory}, or on finding theoretical guarantees for a generic  mirror map.

NPG has been proven to converge to the optimal policy, up to a difference in expected return (or \emph{error floor}), in several settings, e.g. using tabular, linear, general parameterization~\citep{agarwal2021theory} or regularizing rewards \citep{RN150}. It has been shown that NPG benefits from implicit regularization~\citep{hu2022actorcritic}, that it can exploit optimism~\citep{zanette2021cautiously, liu2023optimistic}, and some of its variants have been evaluated in simulations~\citep{vaswani2022general, vaswani2023decision}.
When considering other specific mirror maps investigated in the RL literature, the Tsallis entropy has been noted for enhancing performance in offline settings~\citep{tomar2022mirror} and for offering improved sample efficiency in online settings~\citep{li2023policy}, when compared to the negative entropy.\looseness=-1

There is a substantial body of theoretical research focused on the general case of mirror maps. This research demonstrates that PMD achieves convergence to the optimal policy under the same conditions as NPG, as evidenced by various studies~\citep{xiao2022convergence, lan2022policy, yuan2023linear, alfano2023novel}. Except for the setting where we have access to the true value of the policy, convergence guarantees in stochastic settings are subject to an error floor due to the inherent randomness or bias within the algorithm. In the majority of PMD analyses involving generic mirror maps, both the convergence rate and the error floor show mild dependence on the specific choice of mirror map. Typically, the effect of the mirror map appears explicitly as a multiplicative factor in the convergence rate and it appears implicitly in the error floor. These analyses often rely on \emph{upper bounds}, meaning they may not accurately reflect the algorithms' actual performance in applications.\looseness = -1

In this work, we contribute to the literature with an empirical investigation of PMD, with the objectives of finding a mirror map that consistently outperforms the negative entropy mirror map and of understanding how the theoretical guarantees of PMD relate to simulations. We first consider a set of tabular environments, i.e.\ Grid-World \citep{oh2020discovering}, where we compare the empirical results to the theoretical guarantees given by \cite{xiao2022convergence}, which we can compute as we have full control over the environment. In this setting, we provide a learned mirror map which outperforms the negative entropy in all tested environments. Our experiments suggest that the error floor appearing in prototypical PMD convergence guarantees is not a good performance indicator: our learned mirror map achieves the best value while presenting the worst theoretical error floor, implying that the \emph{upper} bounds typically considered in the PMD literature are loose with respect to the choice of the mirror map. Additionally, our experiments indicate that having small policy updates leads to smoother value improvements over time with less instances of performance degradation, as suggested by the monotonic policy improvement property given by \cite{xiao2022convergence}. We then consider two non-tabular settings, i.e.\ the Basic Control Suite and the MinAtar Suite, which are more realistic but also more complex, and therefore prevent us from computing the exact theoretical guarantees. We learn a mirror map for each of these environments and, also in this case, show that the learned mirror maps lead to a higher performance of PMD than the negative entropy. Moreover, we show that the learned mirror maps generalize well to other tasks, by testing each of the learned mirror maps on all the other environments. Lastly, we tackle continuous control tasks in MuJoCo~\citep{todorov2012mujoco}, where we show that the mirror map learned on one environment surpasses the negative entropy across several environments. \looseness = -1

To establish our findings, we employ the standard formulation of PMD \citep{xiao2022convergence} for the tabular setting and the continuous control tasks, and we used a generalized version, Approximate Mirror Policy Optimization (AMPO)~\citep{alfano2023novel}, for the non-tabular setting with discrete action spaces. To allow optimization over the space of mirror maps, we introduce parameterization schemes for mirror maps, one for PMD and one for AMPO. Specifically, we propose a parameterization for $\omega$-potentials, which have been shown to induce a wide class of mirror maps~\citep{krichene2015efficient}. We use evolution strategies (ES) to search for the mirror map that maximizes the performance of PMD and AMPO over an environment. \looseness =-1

AMPO~\citep{alfano2023novel} is a recently-proposed PMD framework designed to integrate general parameterization schemes, in our case neural networks, and arbitrary mirror maps. It benefits from theoretical guarantees, as~\citet{alfano2023novel} show that AMPO has quasi-monotonic updates as well as sub-linear and linear convergence rates, depending on the step-size schedule. These desirable properties make AMPO particularly suitable for our numerical investigation.
 \looseness = -1

ES are a type of population-based stochastic optimization algorithm that leverages random noise to generate a diverse pool of candidate solutions, and have been successfully applied to a variety of tasks \citep{Real_Aggarwal_Huang_Le_2019, salimans2017evolution, such2018deep}. The main idea consists in iteratively selecting higher-performing individuals, w.r.t.\ a fitness function, resulting in a gradual convergence towards the optimal solution. ES algorithms are gradient-free and have been shown to be well-suited for optimisation problems where the objective function is noisy or non-differentiable and the search space is large or complex \citep{BEYER2000239, lu2022discovered, lu2023adversarial}. We use ES to search over the parameterized classes of mirror maps we introduce, by defining the fitness of a particular mirror map as the value of the last policy outputted by PMD and AMPO, for fixed hyper-parameters.

The rest of the paper is organized as follows. In \Cref{sec:setting}, we introduce the setting of RL as well as 
the PMD and AMPO algorithms. We describe the methodology behind our numerical experiments in \Cref{sec:method}, which are then discussed in \Cref{sec:exp}. Finally, we discuss related works from the literature on automatic discovery of machine learning algorithms in \Cref{sec:rel_works} and give our conclusions in \Cref{sec:concl}. \looseness = -1

\section{Preliminaries}
\label{sec:setting}

\subsection{Reinforcement Learning}
Define a discounted Markov Decision Process (MDP) as the tuple $\M=(\S,\A,P,r,\gamma,\mu)$, where $\S$ and $\A$ are respectively the state and action spaces, $P(s' \mid s,a)$ is the transition probability from state $s$ to $s'$ when taking action $a$, $r(s,a) \in [0,1]$ is the reward function, $\gamma$ is a discount factor, and $\mu$ is a starting state distribution.
A \textit{policy} $\pi \in (\Delta(\A))^{\S}$, where $\Delta(\A)$ is the probability simplex over $\A$, represents the behavior of an agent on an MDP, whereby at state $s \in \S$ the agents takes actions according to the probability distribution $\pi(\cdot \mid s)$. \looseness=-1

Our objective is for the agent to find a policy that maximizes the expected discounted cumulative reward for the starting state distribution $\mu$. That is, we want to find 
\begin{equation}
    \label{eq:opt}
    \pi^\star\in \argmax_{\pi\in(\Delta(\A))^\S} \E_{s\sim\mu}[V^\pi(s)].
\end{equation}
Here $V^\pi:\S\rightarrow\R$ denotes the \textit{value function} associated with policy $\pi$ and is defined as 
\[V^\pi(s) := \mathbb{E}
\Big[\sum\nolimits_{t=0}^{\infty}\gamma^t r(s_t,a_t) \mid  \pi, s_0 = s\Big],\]
where $s_t$ and $a_t$ are the current state and action at time $t$ and the expectation is taken over the trajectories generated by $a_t\sim\pi(\cdot|s_t)$ and $s_{t+1}\sim P(\cdot|s_t,a_t)$.

Similarly to the value function, we define the $Q$-function associated with a policy $\pi$ as
\[
Q^\pi(s, a) := \E
\Big[\sum\nolimits_{t=0}^\infty \gamma^t r(s_t, a_t) \mid \pi,  s_0=s,a_0=a\Big],
\]
where the expectation is once again taken over the trajectories generated by the policy $\pi$. When the state and action spaces are finite, the $Q$-function can be expressed as $Q^\pi = (I-\gamma P^\pi)^{-1}r$, where $P^\pi$ is a square matrix where the position $((s,a),(s',a'))$ is occupied by $\pi(a'|s')P(s'|s,a)$. We also define the discounted state visitation distribution as
\begin{align*} \label{eq:d_mu}
    d_{\mu}^\pi(s) := (1-\gamma)\E_{s_0 \sim \mu}\Big[\sum\nolimits_{t=0}^{\infty}\gamma^t P(s_t = s \mid \pi, s_0)\Big],
\end{align*}
\looseness=-1
where $P(s_t = s \mid \pi, s_0)$ represents the probability of the agent being in state $s$ at time $t$ when following policy $\pi$ and starting from $s_0$. The probability distribution over states $d_{\mu}^\pi(s)$ represents the proportion of time spent on state $s$ when following policy $\pi$.

\subsection{Policy Mirror Descent}
\label{sec:mirr}
We review the PMD framework, starting from mirror maps \citep{bubeck2015convex}. Let $\X\subseteq\R^\A$ be a convex set. A \textit{mirror map} $h: \X \rightarrow \R$ is a strictly convex, continuously differentiable and essentially smooth function\footnote{A function $h$ is \emph{essentially smooth} if $\lim_{x\rightarrow\partial\X}\lnorm{\nabla h(x)}_2 = +\infty$, where $\partial\X$ denotes the boundary of $\X$.} that satisfies $\nabla h(\X)=\R^\A$. In particular, we consider mirror maps belonging to the $\omega$-potential mirror map class, which contains most mirror maps used in the literature.
\begin{definition}[$\omega$-potential mirror map \citep{krichene2015efficient}]
    \label{def:omega}
    For $u\in(-\infty,+\infty]$, $\omega\leq0$, an \emph{$\omega$-potential} is defined as an increasing $C^1$-diffeomorphism $\phi:(-\infty,u)\rightarrow(\omega,+\infty)$ such that
    \looseness = -1
    \hspace{-.2in}
    \[\lim_{x\rightarrow-\infty}\phi(x)=\omega,\; \lim_{x\rightarrow u}\phi(x)=+\infty,\; \int_0^{1}\phi^{-1}(x)dx\leq\infty.\]
    \looseness=-1
    For any $\omega$-potential $\phi$, we define the associated mirror map $h_\phi$ as 
    \hspace{-.3in}
    \[h_\phi(\pi_s) = \sum\nolimits_{a\in\A}\int_1^{\pi(a \mid s)}\phi^{-1}(x)dx.\]
\end{definition}
When $\phi(x) = e^x$ we recover the negative entropy mirror map, which is the standard choice of mirror map in the RL literature \citep{agarwal2021theory, tomar2022mirror, hu2022actorcritic, vaswani2022general, vaswani2023decision}, while we recover the $\ell_2$-norm when $\phi(x)= x$ (see \Cref{app:neg_entr}). Mirror maps belonging to this class are particularly advantageous as they can be defined by a single scalar function $\phi$, without having to account for the dimension of the action space $\A$. The \emph{Bregman divergence} \citep{bregman1967relaxation,censor1997parallel} induced by the mirror map $h$ is defined as 
\[\D_h(x, y) := h(x) - h(y) - \langle\nabla h(y), x - y\rangle,\]
where $\D_h(x, y)\geq 0$ for all $x,y\in \X$. As we further discuss in \Cref{app:breg_div}, the Bregman divergence measures how far two points are in a geometry induced by the mirror map. Given a starting policy $\pi^0$, a learning rate $\eta$ and a mirror map $h$, PMD can be formalized as an iterative algorithm: for all iterations $t\geq 0$,  \looseness =-1
\begin{equation}
        \label{eq:singleup}
        \pi^{t+1} \in \argmax\nolimits_{\pi\in (\Delta(\A))^\S} \E_{s\sim d^t_\mu} [ \eta_t\langle Q^t_s, \pi_s\rangle - \D_h(\pi_s, \pi_s^t)],
    \end{equation}
where we used the shorthand: $V^t:=V^{\pi^t}$, $Q^t := Q^{\pi^t}$, $d^t_\mu := d^{\pi^t}_\mu$ and $y_s := y(s,\cdot)\in \R^\A$, for any function  $y:\S\times\A\rightarrow \R$.
PMD benefits from several theoretical guarantees and, in particular, \cite{xiao2022convergence} shows that PMD enjoys quasi-monotonic updates and convergence to the optimal policy. We give here a slight modification of the statements of these results. Specifically, we do not upper-bound a term regarding the distance between subsequent policies in the result on quasi-monotonic updates, which we use to draw connections between theory and practice. Additionally, to obtain the convergence rate for PMD with constant step-size in the setting where we do not have access to the true Q-function, we combine the analyses on the sublinear convergence of PMD and linear convergence of inexact PMD given by \cite{xiao2022convergence}. We provide a proof in \Cref{app:theo}.
\begin{theorem}[\cite{xiao2022convergence}]
\label{thm1}
    Following update \eqref{eq:singleup}, we have that, for all $t\geq0$
        \begin{equation}
        \label{eq:mon_updates}
            V^{t+1}(\mu) - V^t(\mu)
        \geq -\frac{1}{1-\gamma}\max_{s \in \S}\norm{\widehat{Q}^t_s - Q^t_s}_\infty \max_{s \in \S}\norm{\pi^{t+1}_s - \pi^t_s}_1,
        \end{equation}
        where $\lnorm{\cdot}_\infty$ and $\lnorm{\cdot}_1$ represent the $\ell_\infty$ and the $\ell_1$ norms, respectively, and $\widehat{Q}$ is an estimate of the true $Q$-function. Additionally, at each iteration $T>0$, we have
    \begin{align}
    \label{eq:conv}
        V^\star(\mu) - \sum_{t<T}\frac{\E[V^t(\mu)]}{T} \leq\frac{1}{T} \left(\frac{\E_{s\sim d_\mu^\star}[\D_h(\pi^\star_s,\pi^0_s)]}{\eta_t(1-\gamma)} + \frac{1}{(1-\gamma)^2} \right)+4\frac{\underset{t< T, s\in\S}{\max}\norm{\widehat{Q}^t_s - Q^t_s}_\infty}{(1-\gamma)^2}.
    \end{align}
\end{theorem}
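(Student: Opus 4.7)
The plan is to prove the two statements separately, with the quasi-monotonic update bound \eqref{eq:mon_updates} serving as the main building block for the convergence bound \eqref{eq:conv}. Throughout, let me write $\epsilon^t := \max_{s}\lnorm{\widehat{Q}^t_s - Q^t_s}_\infty$ for brevity.

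For \eqref{eq:mon_updates}, I would start from the performance difference lemma of Kakade and Langford, which gives
\[
V^{t+1}(\mu) - V^t(\mu) \;=\; \frac{1}{1-\gamma}\, \E_{s \sim d_\mu^{t+1}}\!\bigl[\langle Q^t_s, \pi^{t+1}_s - \pi^t_s\rangle\bigr].
\]
Next, I would extract the first-order optimality condition of the PMD update \eqref{eq:singleup} at state $s$, namely $\langle \eta_t\widehat{Q}^t_s - \nabla h(\pi^{t+1}_s) + \nabla h(\pi^t_s),\, \pi^t_s - \pi^{t+1}_s\rangle \le 0$, and combine it with the monotonicity of $\nabla h$ (strict convexity of $h$) to deduce that $\langle \widehat{Q}^t_s,\, \pi^{t+1}_s - \pi^t_s\rangle \ge 0$ for every $s$. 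Adding and subtracting $\widehat{Q}^t_s$ inside the inner product in the PDL and applying Hölder's inequality then yields the claimed lower bound in terms of $\epsilon^t$ and $\max_s\lnorm{\pi^{t+1}_s - \pi^t_s}_1$.

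For \eqref{eq:conv}, the main tool is the three-point identity for Bregman divergences: combining the optimality condition of \eqref{eq:singleup} with the identity $\D_h(\pi^\star_s,\pi^t_s) = \D_h(\pi^\star_s,\pi^{t+1}_s) + \D_h(\pi^{t+1}_s,\pi^t_s) + \langle \nabla h(\pi^{t+1}_s) - \nabla h(\pi^t_s),\, \pi^\star_s - \pi^{t+1}_s\rangle$ gives, at each state,
\[
\eta_t \langle \widehat{Q}^t_s,\, \pi^\star_s - \pi^{t+1}_s\rangle \;\le\; \D_h(\pi^\star_s,\pi^t_s) - \D_h(\pi^\star_s,\pi^{t+1}_s) - \D_h(\pi^{t+1}_s,\pi^t_s).
\]
Taking expectation over $s \sim d_\mu^\star$ and decomposing
$\langle \widehat{Q}^t_s, \pi^\star_s - \pi^{t+1}_s\rangle = \langle Q^t_s, \pi^\star_s - \pi^t_s\rangle - \langle Q^t_s, \pi^{t+1}_s - \pi^t_s\rangle + \langle \widehat{Q}^t_s - Q^t_s, \pi^\star_s - \pi^{t+1}_s\rangle$, I would now use the performance difference lemma with respect to $d_\mu^\star$ to recognize the first piece as $(1-\gamma)[V^\star(\mu) - V^t(\mu)]$; bound the second piece by reusing the argument from \eqref{eq:mon_updates}; and bound the third piece via Hölder using $\lnorm{\pi^\star_s - \pi^{t+1}_s}_1 \le 2$. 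Dividing by $\eta_t(1-\gamma)$, summing from $t = 0$ to $T-1$, and telescoping the Bregman terms collapses the right-hand side to $\E_{s \sim d_\mu^\star}[\D_h(\pi^\star_s,\pi^0_s)]/(\eta_t(1-\gamma)T)$ plus the accumulated error terms, which after using $\|Q^t\|_\infty, \|\widehat{Q}^t\|_\infty \le 1/(1-\gamma)$ and bounding the initial suboptimality gap by $1/((1-\gamma)^2 T)$ produce the second and third terms of \eqref{eq:conv}.

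The main obstacle I anticipate is bookkeeping the various $(1-\gamma)^{-1}$ factors and the inexactness error so that the telescoping yields exactly the $\frac{1}{T(1-\gamma)^2}$ and $\frac{4\epsilon}{(1-\gamma)^2}$ coefficients stated. In particular, the extra $1/(1-\gamma)$ relative to the naive estimate comes from the gap between the starting suboptimality $V^\star(\mu) - V^0(\mu) \le 1/(1-\gamma)^2$ (via $r \in [0,1]$) and the Bregman-based bound, and care is needed to absorb it into the $1/T$ term rather than into the error floor; the factor $4$ in the error floor requires combining the crude bound $\lnorm{\pi^\star_s-\pi^{t+1}_s}_1 \le 2$ from the cross term with the corresponding bound from the quasi-monotonic step. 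Finally, because the theorem statement is a mild reformulation of \citet{xiao2022convergence} in which the distance between successive policies is left un-bounded, I would keep this term explicit in \eqref{eq:mon_updates} and only upper bound it when passing to \eqref{eq:conv}, so that the monotonicity estimate remains usable in downstream arguments.
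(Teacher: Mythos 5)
Your proposal is correct and follows essentially the same route as the paper's proof in Appendix D: the performance difference lemma plus the non-negativity of $\langle \widehat{Q}^t_s, \pi^{t+1}_s - \pi^t_s\rangle$ (which you derive from first-order optimality and monotonicity of $\nabla h$, while the paper invokes the three-point descent lemma---the same fact) gives \eqref{eq:mon_updates}, and the three-point inequality with comparator $\pi^\star$, the decomposition into a PDL term, a successive-policies term that telescopes after the change of measure to $d^{t+1}_{d^\star_\mu}$, and Hölder-bounded error terms gives \eqref{eq:conv}. Your accounting of where the factor $4$ and the $1/(T(1-\gamma)^2)$ term arise matches the paper's bookkeeping, up to a harmless slip in attributing the latter to $V^\star(\mu)-V^0(\mu)\le 1/(1-\gamma)^2$ rather than to the telescoped difference $V^T(d^\star_\mu)-V^0(d^\star_\mu)\le 1/(1-\gamma)$ divided by the extra $(1-\gamma)$.
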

The statement of \Cref{thm1} is similar to many results in the literature on PMD and NPG~\citep{agarwal2021theory, xiao2022convergence, lan2022policy, hu2022actorcritic}. That is, the convergence guarantee involves two terms, i.e.\ a convergence rate, which involves the Bregman divergence between the optimal policy and the starting policy, and an error floor, which involves the estimation error $\max_{s \in \S}\lnorm{\widehat{Q}^t_s - Q^t_s}_\infty$. Given that by setting $\eta_0 = \E_{s\sim d_\mu^\star}[\D_h(\pi^\star_s,\pi^0_s)](1-\gamma)$ we obtain the convergence rate $2(T(1-\gamma)^2)^{-1}$, and that the error floor has no explicit dependence on the mirror map, Equation~\eqref{eq:conv} suggests that the mirror map has a mild influence on the performance of PMD. The only way the mirror map seems to affect the convergence guarantee is, implicitly, by changing the path of the algorithm and therefore influencing the estimation error. Similar observations can be made for several results in the PMD literature that share the same structure of convergence guarantees. On the other hand, Equation~\eqref{eq:mon_updates} suggests that mirror maps that prevent large updates of the policy cause the PMD algorithm to be less prone to performance degradation, as the lower bound is close to 0 when the policy update distance $\max_{s \in \S}\lnorm{\pi^{t+1}_s - \pi^t_s}_1$ is small. One of the contributions of our work is to challenge and refute the conclusion that the mirror map has little influence on the convergence of PMD, highlighting a gap between theoretical guarantees, which are \emph{based on upper bounds}, and the actual performance observed in PMD-based methodologies. Our empirical studies reveal that the choice of mirror map significantly influences both the speed of convergence and the minimum achievable error floor in PMD. Additionally, we provide evidence that a mirror map that prevents large updates throughout training leads to a better performance, as suggested by \eqref{eq:mon_updates}. \looseness = -1

When applying PMD to continuous control tasks, we replace the tabular policy in \eqref{eq:singleup} with a parametrized one. That is, for all iterations $t\geq 0$, we obtain the updated policy as 
\begin{equation}
\label{eq:singleup_par}
    \pi_{\theta^{t+1}} \in \argmax\nolimits_{\pi_\theta:\theta\in \Theta} \E_{s\sim d^t_\mu} [ \eta_t\E_{a\sim\pi_\theta(\cdot|s)}(Q^t(s,a)) - \D_h(\pi_\theta(\cdot|s), \pi^t(\cdot|s))],
\end{equation}
where $\{\pi_\theta:\theta\in \Theta\}$ is a class of parametrized policies.

\subsection{Approximate Mirror Policy Optimization}
AMPO is a theoretically sound framework for deep reinforcement learning based on mirror descent, as it inherits the quasi-monotonic updates and convergence guarantees from the tabular case~\citep{alfano2023novel}. Given a parameterized function class $\F^\Theta = \{f^\theta: \S\times\A\rightarrow\R, \theta\in\Theta\}$, an initial scoring function $f^{\theta^0}$, a step-size $\eta$ and an $\omega$-potential mirror map $h_\phi$, AMPO can be described, for all iterations $t$, by the two-step update 
\begin{equation} \label{eq:pi_t}
    \pi^{t}(a \mid s)= \sigma(\phi(\eta f^{t}(s,a)+\lambda_s^{t}))\qquad \forall s\in\S, a\in\A,
\end{equation}
\begin{align}
    \label{eq:ampo_update}
    \theta^{t+1}\in\argmin_{\theta \in \Theta}\E_{s\sim d^t_\mu, a\sim\pi(\cdot|s)}\big[\big(f^{\theta}(s,a) - Q^t(s,a) -\eta^{-1}\max(\eta f^t(s,a)+\lambda_s^{t}, \phi^{-1}(0))\big)^2\big]
\end{align}
where $\lambda_s^{t}\in\R$ is a normalization factor to ensure $\pi^{t}_s\in\Delta(\A)$ for all $s\in\S$, $f^t:=f^{\theta^t}$, and $\sigma(z) = \max(z,0)$ for $z\in\R$. Theorem 1 by \citet{krichene2015efficient} ensures that there always exists a normalization constant $\lambda_s^{t}\in\R$. As shown by \cite{alfano2023novel}, AMPO recovers the standard formulation of PMD in \eqref{eq:singleup} in the tabular setting. Assuming for simplicity that $\phi(x)>0$ for all $x\in\R$, the minimization problem in \eqref{eq:ampo_update} implies that, at each iteration $t$, $f^t$ is an approximation of the sum of the $Q$-functions up to that point, that is $f^t\simeq \sum_{i=0}^{t-1}Q^i$. Therefore, the scoring function $f^t$ serves as an estimator of the value of an action.
\section{Methodology}
\label{sec:method}
Denote by $\H$ the class of $\omega$-potentials mirror maps. Our objective is to search for the mirror map that maximizes the value of the last policy outputted by our mirror descent based algorithms, for a fixed time horizon $T$. That is, we want to find
\begin{equation}
\label{eq:meta_obj}
    h^\star\in\argmax_{h\in\H}\E\left[V^T(\mu)\right],
\end{equation}
where the expectation is taken over the randomness of the policy optimization algorithm and the policy updates are based on the mirror map $h$. 
\looseness=-1
Depending on the setting, we parameterize the mirror map by parameterizing either $\phi^{-1}$ or $\phi$ as monotonically increasing functions.  One of the primary objectives of this work it to motivate further research into the choice of mirror maps, as influenced by the choice of the function $\phi$, moving beyond the conventional use of $\phi(x) = e^x$. This is achieved by examining how different choices of $\phi$ influence the trajectory of training and demonstrating that, in many cases, there is a mirror map that outperforms the negative entropy by a large margin.

\subsection{Policy Mirror Descent}
We parameterize $\phi^{-1}$ as a one layer neural network with 126 hiden units, where all kernels are non-negative and the activation functions are equally split among the following convex and concave monotonic non-linearities: $x^3$, $(x)_+^2$, $(x)_+^{1/2}$, $(x)_+^{1/3}$, $\log((x)_++10^{-3})$ and $e^x$, where $(x)_+ = \max(x, 0)$. To ensure that we are able to recover the negative entropy and the $\ell_2$-norm, we add $ax+b\log(x)$ to the final output, where $a,b\geq0$

To search for the best mirror map within this class, we employ a slight variation of the OpenAI-ES strategy \citep{salimans2017evolution}, adapted to the multi-task setting \citep{jackson2023discovering}. Denote by $\psi$ the parameters of the mirror map and by $F(\psi)$ the objective function in \eqref{eq:meta_obj}. Given a distribution of tasks $\mathcal{E}$, we estimate the gradient $\nabla_\psi F(\psi)$ as
\begin{equation}
    \label{eq:openes}
    E_{\epsilon \sim \mathcal{N}(0, I_d)} \left[
    E_{e \sim \mathcal{E}}
    \left[
        \frac{\epsilon}
        {2\sigma} (F_e(\psi + \sigma \epsilon) - F_e(\psi - \sigma \epsilon))
    \right]
\right],
\end{equation}
where $\mathcal{N}(0, I_d)$ is the multivariate normal distribution, $d$ is the number of parameters, and $\sigma>0$ is a hyperparameter regulating the variance of the perturbations. To account for different reward scales across tasks, we perform a rank transformation of the objective functions, whereby, for each sampled task $e$, we return 1 for the higher performing member between $\psi + \sigma \epsilon$ and $\psi - \sigma \epsilon$, and 0 for the other. We note that, when the distribution of tasks $\mathcal{E}$ covers a single task, we recover the standard OpenAI-ES strategy.\looseness = -1
\subsection{Approximate Mirror Policy Optimization}

As non-tabular environments with discrete action space, we consider the Basic Control Suite (BCS) and the MinAtar suite. Given the higher computational cost of simulations on these environments w.r.t.\ to the tabular setting, we replace the neural network parameterization for the mirror map with one with fewer parameters, in order to reduce the dimension of the mirror map class we search over.
We define the parameterized class $\Phi = \{\phi_\psi:\R\rightarrow[0,1], \psi\in\R^n_+\}$, with 
\begin{align*}
  \phi_\psi(x) = \begin{cases}
    0 &\text{if } x \leq 0, \\
    \frac{x}{\psi_1 n} & \text{if } 0 < x \leq \psi_1, \\
    \frac{j}{n} + \frac{x-\sum_{i=1}^j\psi_i}{n\psi_{j+1}} &\text{if } \sum_{i=1}^j\psi_i < x \leq \sum_{i=1}^{j+1}\psi_i,\\
    1 &\text{if } x > 1,
  \end{cases}
\end{align*}
\begin{wrapfigure}{r}{0.5\textwidth}
\vspace{-0.25cm} 
\centering
\includegraphics[width=0.5\textwidth]{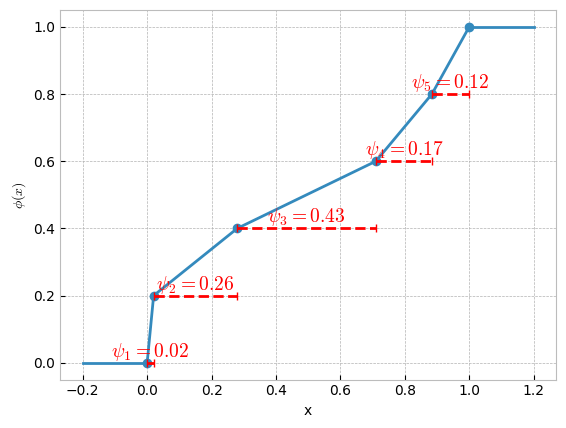}
\caption{A plot visually demonstrating the parameterization for $\phi$.} 
\label{fig:phi_param}
\vspace{-0.25cm} 
\end{wrapfigure}where $1\leq j\leq n-1$. In other words, $\phi$ is defined as a piece-wise linear function with $n$ steps where, for all $j\leq n$, $\phi(\sum_{i=1}^j\psi_i)= j/n$ and subsequent points are interpolated with a straight segment. This is illustrated in Figure \ref{fig:phi_param}.
We note that the $\omega$-potentials within $\Phi$ violate some of the constraints in Definition~\ref{def:omega}, as they are non-decreasing instead of increasing and $\lim_{x\rightarrow\infty}\phi(x) =1$ for all $\phi\in\Phi$. In \Cref{app:par_phi}, we show that, if $\phi\in\Phi$, we can construct an $\omega$-potential $\phi'$ that satisfies the constraints in Definition~\ref{def:omega} such that $\phi$ and $\phi'$ induce the same policies along the path of AMPO. \looseness = -1

To effectively learn the hyperparameters, we employ the Separable Covariance Matrix Adaptation Evolution Strategy (sep-CMA) \citep{ros2008simple}, a variant of the popular algorithm CMA-ES \citep{cmaes}.
CMA-ES is, essentially, a second-order method adapted for gradient free optimization. At every generation, it samples $n$ new points from a normal distribution, parameterized by a mean vector $m_k$ and a covariance matrix $C_k$. That is,  the samples for generation $k$,  $x^k_1 , \dots, x^k_n$, are distributed i.i.d.\ according to $x^k_i \sim \mathcal{N}(m_k, C_k)$. For each generation $k$, denote the $m\leq n$ best performing samples as $x^{*k}_1,\dots, x^{*k}_m$. Then update the mean vector as $m_{k+1} = \sum_{i=1}^m w_i x^{*k}_i$, where $\sum_{i=1}^m w_i = 1$, so that the next generation is distributed around the weighted mean of the best performing samples. $C_{k+1}$ is also updated to reflect the covariance structure of $x^{*k}_1, \dots, x^{*k}_m$, in a complex way beyond the scope of this text. Due to the need to update $C_k$ based on covariance information, CMA-ES exhibits a quadratic scaling behavior with respect to the dimensionality of the search space, potentially hindering its efficiency in high-dimensional settings. To improve computational efficiency, we adopt sep-CMA, which introduces a diagonal constraint on the covariance matrix and reduces the computational complexity of the algorithm. \looseness = -1

\section{Experiments}
\label{sec:exp}
In this section, we discuss the results of our numerical experiments. We start by presenting the tabular setting, where we track errors in order to understand what properties are desirable in a mirror map, and proceed by showing our results in the non-tabular setting.

\subsection{Tabular setting: Grid-World}
As tabular setting we adopt a discounted and infinite horizon version of Grid-World \citep{oh2020discovering}, which is a large class of tabular MDPs. 

\paragraph{Model architecture and training} We define the tabular policy as a one-layer neural network with a softmax head, which takes as input a one-hot encoding of the environment state and outputs a distribution over the action space. We train the policy using the PMD update in \eqref{eq:singleup}, where we solve the minimization problem through stochastic gradient descent and estimate the $Q$-function through generalized advantage estimation (GAE) \citep{schulman2016highdimensional}. We perform a simple grid-search over the hyperparameters to maximize the performance for the negative entropy and the $\ell_2$-norm mirror maps, in order to have a fair comparison. We report the chosen hyperparameters in \Cref{app:hyp_pars}. The training procedure is implemented in Jax, using evosax~\citep{evosax2022github} for the evolution. We run on four A40 GPUs, and the optimization process takes roughly 12 hours.

\paragraph{Environment} We adopt the version of Grid-World implemented by \cite{jackson2023discovering}, and adapt it to the discounted and infinite horizon setting. We learn a single mirror map by training PMD on a continuous distribution of Grid-World environments, and test PMD with the learned mirror map on five held-out configurations from previous publications \citep{oh2020discovering,chevalier2024minigrid} and on 256 randomly sampled configurations. For all PMD iterations $t$, we track two quantities that appear in \Cref{thm1}, that is the estimation error \ $\max_{s \in \S}\lnorm{\widehat{Q}^t_s - Q^t_s}_\infty$, and the distance between policy updates $ \max_{s \in \S}\lnorm{\pi^{t+1}_s - \pi^t_s}_1$. To obtain the true $Q$-function, we compute the transition matrix $P^{\pi^t}$ and use the formula specified in \Cref{sec:setting}, that is $Q^t = (I-\gamma P^{\pi^t})^{-1}r$. PMD is run for $128$ iterations and $2^{18}\simeq250k$ total environment steps for all Grid-World configurations.

\begin{figure}
    \centering
    \includegraphics[width=1\linewidth]{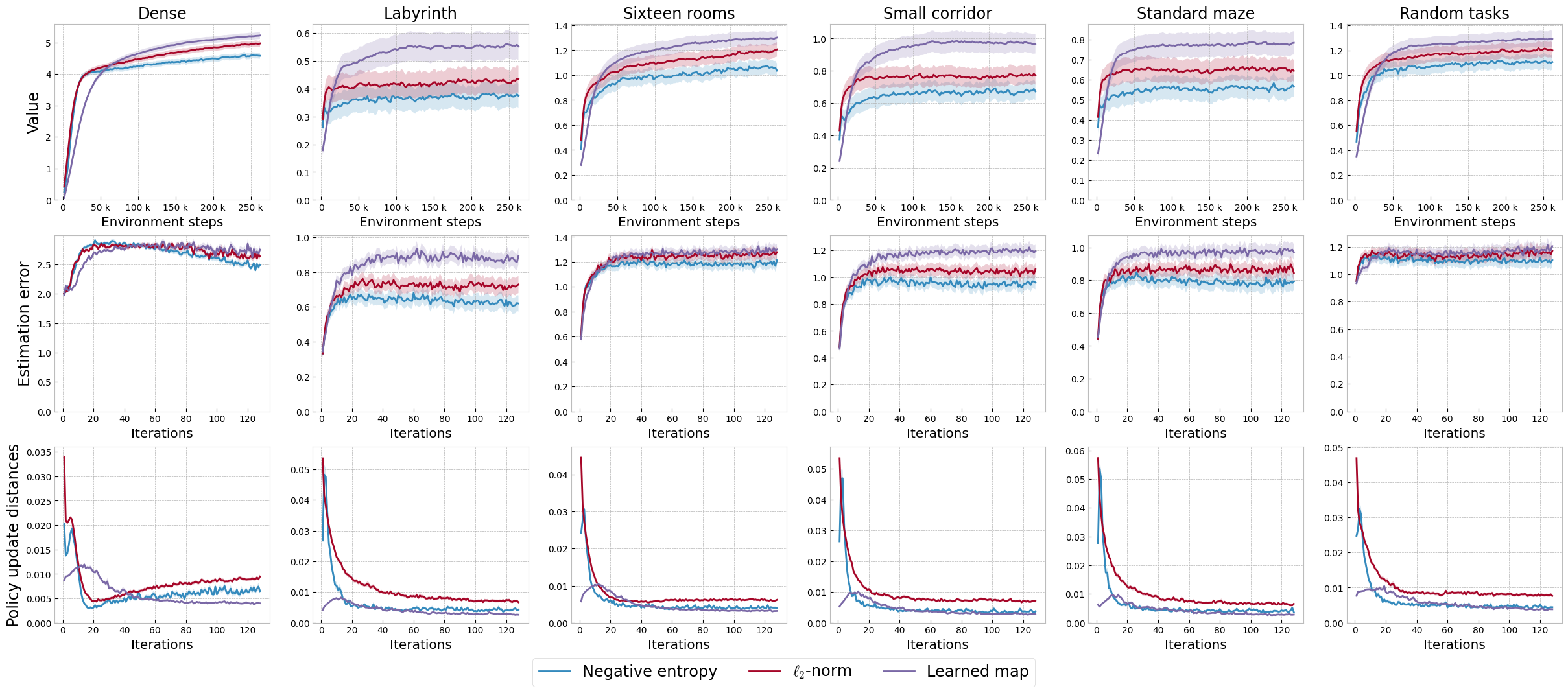}
    \caption{Comparison between the learned map and the negative entropy and $\ell_2$-norm mirror maps across a range of held-out configurations of Grid-World. We display the average over 256 runs and report the standard error as a shaded region. The column ``Random tasks'', reports the averaged metrics for 256 randomly sampled configurations of Grid-World.}
    \label{fig:gridw-results}
\end{figure}

We show the results of our simulations in \Cref{fig:gridw-results}. As shown by the first row, the learned mirror map, the $\ell_2$-norm, and the negative entropy consistently rank first, second and third, respectively, in all tested configurations, in terms of final value. These results advocate the effectiveness of our methodology, as we are able to find a mirror map that outperforms the benchmark mirror maps in all tested environments. We highlight that the first five environments in \Cref{fig:gridw-results} are not part of the task distribution used during the evolution, demonstrating the generalizability of our approach. Moreover, we have that the $\ell_2$-norm consistently outperforms the negative entropy, further proving that the negative entropy is not always the best choice of mirror map. Another shared property among all training curves, is that the learned mirror map presents a slower convergence in the initial iterations w.r.t.\ the $\ell_2$-norm and to the negative entropy, as testified by a lower value, but convergence to a higher value in the long run. Lastly, we note that in all environments most of the value improvement happens in the first 50k environment steps. 

To gain a better understanding of why the learned mirror maps outperforms the negative entropy and the $\ell_2$-norm, and to draw connections with the theoretical results outlined in \Cref{sec:mirr}, we report the estimation error and the distance between policy updates for all mirror maps. The first conclusion that we draw is that a smaller estimation error does not seem to be related to a higher performance. On the contrary, the second row of \Cref{fig:gridw-results} shows how the three mirror maps consistently have the same ranking in both value and estimation error, which is exactly the opposite of what \Cref{thm1} would suggest. On the other hand, the first and third rows of \Cref{fig:gridw-results} show how the lower bound on the performance improvement in \eqref{eq:mon_updates} brings some valid insight on the behaviour of PMD during its first iterations, which are the ones that bring the largest improvement. In all configurations, we have that in the initial iterations of PMD the learned mirror map induces the smallest policy update distances as well as the performance curve with fewer dips in value, while both the negative entropy and the $\ell_2$-norm induce larger policy update distances and performance curves with several dips in value. This observation confirms the behaviour described by \eqref{eq:mon_updates}, whereby a small distance between policy updates prevents large performance degradation in policy updates.

\subsection{Non-tabular setting: Basic Control and MinAtar suites}

\begin{figure}
    \centering
    \includegraphics[width=1\linewidth]{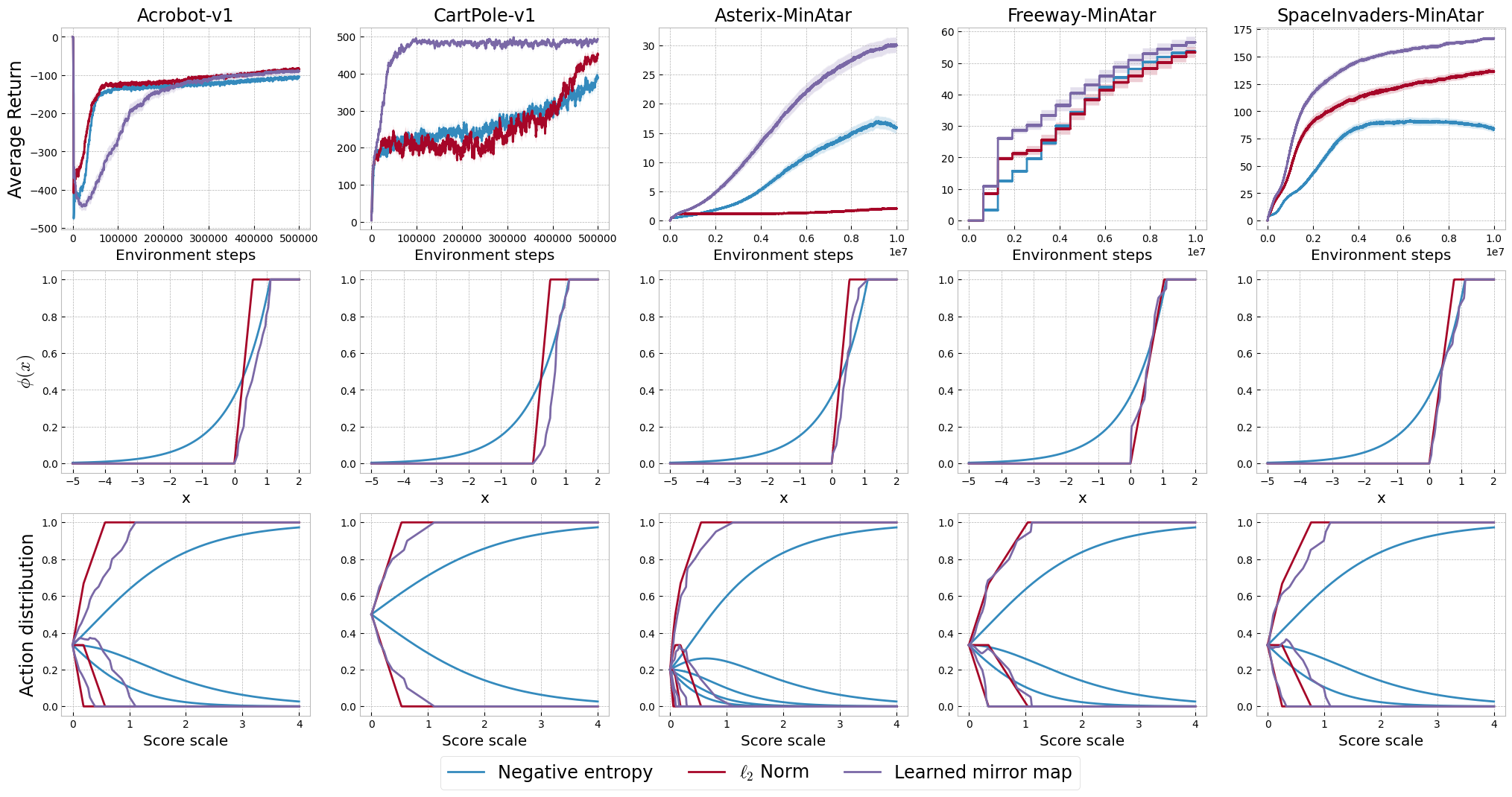}
    \caption{Comparison between the learned mirror map, the $\ell_2$-norm and the negative entropy across a range of standard environments. The top plots present the performance of AMPO for all mirror maps, reporting the average over 100 realizations and a shaded region denoting the standard error around the average. The middle plots report the $\omega$-potentials that induce the mirror maps. The bottom plots report the policy distribution according to \eqref{eq:pi_t}, for each mirror map and score scales. The score scales are obtained by multiplying the vector $[1,\dots,|\A|]$ by a variable $c\in[0,4]$.}
    \label{fig:ampo-results}
    \vspace{-0.5cm}
\end{figure}
\paragraph{Model architecture and training} We define the scoring function as a deep neural network, which we train using the AMPO update in \eqref{eq:ampo_update} and \eqref{eq:pi_t}, where \eqref{eq:ampo_update} is solved through Adam and the $Q$-function is estimated through GAE. We optimize the hyper-parameters of AMPO for the negative entropy mirror map for each suite, using the hyper-parameter tuning framework Optuna \citep{akiba2019optuna}. This is done to ensure we are looking at a fair benchmark of performance when using the negative entropy mirror map. We report the chosen hyper-parameters in \Cref{app:hyp_pars}. We then initialize the parameterized mirror map to be an approximation of the negative entropy\footnote{We achieve this by assigning $\psi_i \propto \log(i/(i-1))$, for $i=2,\dots, n$, and setting $\psi_1 \propto 3\log(10)$.} and run Sep-CMA-ES on each environment separately. The whole training procedure is implemented in JAX, using gymnax environments~\citep{gymnax2022github} and evosax~\citep{evosax2022github} for the evolution. We run on  8 GTX 1080Ti GPUs, and the optimization process takes roughly 48 hours for a single environment. 
\vspace{-0.1cm}
\paragraph{Environments} We test AMPO on the Basic Controle Suite (BCS) and the MinAtar Suite. For BCS, we run the evolution for 600 generation, each with 500k timesteps. For MinAtar, we run the evolution for 500 generation with 1M timesteps, then run 100 more generations with 10M timesteps. 

Our empirical results are illustrated in \Cref{fig:ampo-results}, where we show the performance of AMPO for the learned mirror map, for the negative entropy and for the $\ell_2$-norm. For all environments and mirror maps, we use the hyper-parameters returned by Optuna for AMPO with the negative entropy. Our learned mirror map leads to a better overall performance in all environments, apart from Acrobot, where it ties with the $\ell_2$-norm. We observe the largest improvement in performance on Asterix and SpaceInvaders, where the average return for the learned mirror map is more than double the one for the negative entropy. \Cref{fig:ampo-results} also suggests that different mirror maps may result in different error floors, as shown by the performance curves in Acrobot, Asterix and SpaceInvaders, where the negative entropy converges to a lower point than the learned mirror map. \looseness = -1

The second and third row of \Cref{fig:ampo-results} illustrate the properties of the learned mirror map, in comparison to the negative entropy and the $\ell_2$-norm. In particular, the second row shows the corresponding $\omega$-potential, while the third row shows the policy distribution induced by the mirror map according to \eqref{eq:pi_t}, depending on the scores assigned by the scoring function to each action. A shared property among all the learned mirror maps is that they all lead to assigning $0$ probability to the worst actions for relatively small score scales, whilst the negative entropy always assigns positive weights to all actions. In more complex environments, where the evaluation of a certain action may be strongly affected by noise or where the optimal state may be combination locked~\citep{misra2020kinematic}, this behaviour may lead to a critical lack of exploration. However, it appears that this is not the case in these environments, and we hypothesise that by setting the probability of the worst actions to $0$ the learned mirror maps avoid wasting samples and hence can converge to the optimal policy more rapidly. \looseness = -1

\begin{table}[]
\caption{The table contains, for each entry, the value of the final policy outputted by AMPO trained on the environment corresponding to the column with the mirror map learned on the environment corresponding to the row. The last row represents the performance of AMPO with the negative entropy for the corresponding column environments. The value is averaged over 100 runs. Green cells correspond to a value higher than that associated to the negative entropy.}
\vspace{0.25cm}
\label{tab:transfer}
\centering
\begin{tabular}{cccccc}
\toprule
                 & Acrobot                         & CartPole                       & Asterix                       & Freeway                       & SpaceInvaders                  \\ \midrule
Acrobot          & \cellcolor[HTML]{32CB00}-88.49  & \cellcolor[HTML]{32CB00}476.41 & \cellcolor[HTML]{32CB00}24.03 & \cellcolor[HTML]{32CB00}56.00 & \cellcolor[HTML]{32CB00}144.01 \\ 
CartPole         & \cellcolor[HTML]{32CB00}-83.76  & \cellcolor[HTML]{32CB00}499.93 & \cellcolor[HTML]{32CB00}27.26 & 52.26                         & \cellcolor[HTML]{32CB00}100.07 \\ 
Asterix          & \cellcolor[HTML]{32CB00}-103.55 & \cellcolor[HTML]{32CB00}490.86 & \cellcolor[HTML]{32CB00}30.22 & \cellcolor[HTML]{32CB00}58.56 & \cellcolor[HTML]{32CB00}122.00 \\ 
Freeway          & \cellcolor[HTML]{32CB00}-82.51  & \cellcolor[HTML]{32CB00}457.47 & 3.20                          & \cellcolor[HTML]{32CB00}58.21 & \cellcolor[HTML]{32CB00}143.93 \\ 
SpaceInvaders    & \cellcolor[HTML]{32CB00}-78.29  & \cellcolor[HTML]{32CB00}489.56 & 4.36                          & 22.27                         & \cellcolor[HTML]{32CB00}170.24 \\
Negative entropy & -105.63                         & 359.14                         & 17.80                         & 53.69                         & 81.77                \\ \bottomrule         
\end{tabular}
\end{table}

Our last result consists in testing each learned mirror map across the other environments we consider. In \Cref{tab:transfer}, we report the value of the final policy outputted by AMPO for all learned maps, plus the negative entropy, and for all environments, averaged over 100 runs. The table shows that the learned mirror maps generalize well to different environments and to different sets of hyper-parameters, which are shared within BCS and MinAtar but not between them. In particular, we have that the mirror maps learned in Acrobot and Asterix outperform the negative entropy in all environments, those learned in CartPole and Freeway outperform the negative entropy in 4 out of 5 environments, and that learned in SpaceInvaders outperforms the negative entropy in 3 out of 5 environments. These results show that our methodology can be useful in practice, as it benefits from good generalization across tasks. \looseness = -1

\subsection{Continuous Control: MuJoCo}
\paragraph{Model architecture and training} We define the policy as a deep neural network and optimize it using the PMD update in~\eqref{eq:singleup_par}, which is computed using Adam and the estimate of the $Q$-function obtained through GAE. As previously discussed, we ensure a fair comparison by tuning the hyper-paramers to maximize the performance of the negative entropy. We report the chosen hyper-parameters in \Cref{app:hyp_pars}. The mirror map is initialized to be close to the negative entropy and is trained for 256 generations using OpenAI-ES on a single MuJoCo environment. We run on eight A40 GPUs, and the optimization process takes roughly 24 hours.
\paragraph{Environments} We use the \texttt{brax} library~\citep{brax2021github} to simulate three MuJoCo environments, i.e.\ {Hopper}, {Halfcheetah}, and {Ant}.  The mirror map is learned on {Hopper} and is then tested on all environments, using $10^7$ environment steps and 488 PMD update steps.

\begin{figure}[ht]
    \centering
    \includegraphics[width=0.99\linewidth]{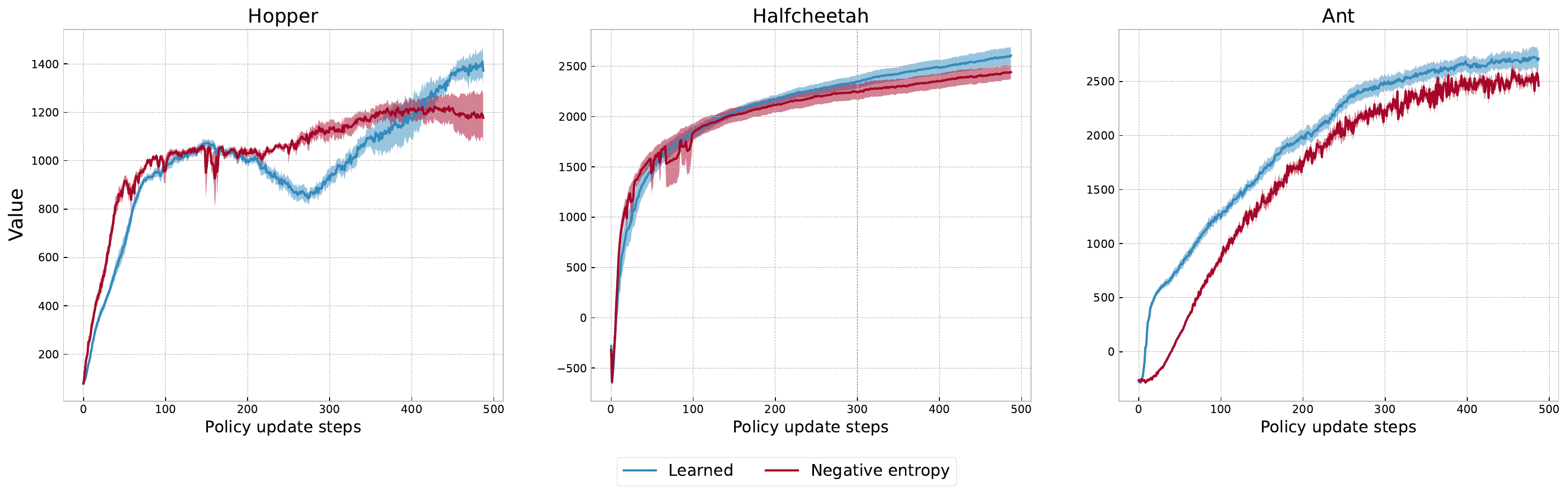}
    \caption{Comparison between the mirror map learned on Hopper and the negative entropy on three MuJoCo environments. The plots present the performance of PMD for both mirror maps, reporting the average over 8 realizations and a shaded region denoting the standard error around the average.}
    \label{fig:mujoco}
\end{figure}

\Cref{fig:mujoco} shows that the mirror map learned on {Hopper} significantly improves in terms of final performance upon the negative entropy in both train and test environments. This result confirms the ability of our methodology to learn generalizable mirror maps, even in complex continuous control tasks. \looseness = -1

\section{Conclusion}
\label{sec:concl}
Our study presents an empirical examination of PMD, where we successfully test the possibility of learning a mirror map that outperforms the negative entropy in both the tabular and non-tabular settings. In particular, we have shown that the learned mirror maps perform well on a set of configurations in Grid-World and that they can generalize to different tasks in BCS, in MinAtar, and in MuJoCo. Additionally, we have compared the theoretical findings established in the literature and the actual performance of PMD methods in the tabular setting, highlighting how the estimation error is not a good indicator of performance and validating the intuition that small policy updates lead to less instances of performance degradation. Our findings indicate that the choice of mirror map significantly impacts PMD's effectiveness, an aspect not adequately reflected by existing convergence guarantees.\looseness = -1

Our research introduces several new directions for inquiry. From a theoretical perspective, obtaining convergence guarantees that reflect the impact of the mirror map is an area for future exploration. On the practical side, investigating how temporal awareness \citep{jackson2023discovering} or specific environmental challenges, such as exploration, robustness to noise, or credit assignment \citep{osband2019behaviour}, can inform the choice of a mirror map to improve performance represents another research focus. \looseness = -1

\subsubsection*{Acknowledgments}
Carlo Alfano is funded by
the Engineering and Physical Sciences Research Council. Patrick Rebeschini was funded by UK Research and Innovation (UKRI) under the UK government’s Horizon Europe funding guarantee [grant number EP/Y028333/1]. For the purpose of Open Access, the authors have applied a CC BY public copyright licence to any Author Accepted Manuscript (AAM) version arising from this submission.

\subsubsection*{Reproducibility statement}
The implementation of our experiments can be found at \url{https://github.com/c-alfano/Learning-mirror-maps}.

\bibliography{References}

@software{gymnax2022github,
  author = {Robert Tjarko Lange},
  title = {{gymnax}: A {JAX}-based Reinforcement Learning Environment Library},
  url = {http://github.com/RobertTLange/gymnax},
  version = {0.0.4},
  year = {2022},
}

@article{evosax2022github,
  author = {Robert Tjarko Lange},
  title = {evosax: JAX-based Evolution Strategies},
  journal={arXiv preprint arXiv:2212.04180},
  year = {2022},
}

@misc{akiba2019optuna,
      title={Optuna: A Next-generation Hyperparameter Optimization Framework}, 
      author={Takuya Akiba and Shotaro Sano and Toshihiko Yanase and Takeru Ohta and Masanori Koyama},
      year={2019},
      eprint={1907.10902},
      archivePrefix={arXiv},
      primaryClass={cs.LG}
}

@article{ouyang2022training,
  title={Training language models to follow instructions with human feedback},
  author={Ouyang, Long and Wu, Jeffrey and Jiang, Xu and Almeida, Diogo and Wainwright, Carroll and Mishkin, Pamela and Zhang, Chong and Agarwal, Sandhini and Slama, Katarina and Ray, Alex and others},
  journal={Advances in Neural Information Processing Systems},
  year={2022}
}

@inproceedings{misra2020kinematic,
  title={Kinematic state abstraction and provably efficient rich-observation reinforcement learning},
  author={Misra, Dipendra and Henaff, Mikael and Krishnamurthy, Akshay and Langford, John},
  booktitle={International conference on machine learning},
  year={2020},
}

@article{liu2023optimistic,
  title={Optimistic Natural Policy Gradient: a Simple Efficient Policy Optimization Framework for Online RL},
  author={Liu, Qinghua and Weisz, Gellért and György, András and Jin, Chi and Szepesvari, Csaba},
  journal={Advances in Neural Information Processing Systems},
  year={2023}
}

@article{vaswani2023decision,
  title={Decision-Aware Actor-Critic with Function Approximation and Theoretical Guarantees},
  author={Vaswani, Sharan and Kazemi, Amirreza and Babanezhad, Reza and Roux, Nicolas Le},
  journal={Advances in Neural Information Processing Systems},
  year={2023}
}

@article{alfano2023novel,
  title={A novel framework for policy mirror descent with general parametrization and linear convergence},
  author={Alfano, Carlo and Yuan, Rui and Rebeschini, Patrick},
  journal={Advances in Neural Information Processing Systems},
  year={2023}
}

@article{lan2022policy,
  title={Policy optimization over general state and action spaces},
  author={Lan, Guanghui},
  journal={arXiv preprint arXiv:2211.16715},
  year={2022}
}

@article{lu2022discovered,
  title={Discovered policy optimisation},
  author={Lu, Chris and Kuba, Jakub and Letcher, Alistair and Metz, Luke and Schroeder de Witt, Christian and Foerster, Jakob},
  journal={Advances in Neural Information Processing Systems},
  year={2022}
}

@article{li2023policy,
  title={Policy Mirror Descent Inherently Explores Action Space},
  author={Li, Yan and Lan, Guanghui},
  journal={arXiv preprint arXiv:2303.04386},
  year={2023}
}

@article{kullback1951information,
author = {Solomon Kullback and Richard A. Leibler},
title = {{On Information and Sufficiency}},
journal = {The Annals of Mathematical Statistics},
year = {1951},
}

@article{bregman1967relaxation,
title = {The relaxation method of finding the common point of convex sets and its application to the solution of problems in convex programming},
journal = {USSR Computational Mathematics and Mathematical Physics},
year = {1967},
author = {Lev M. Bregman},
}

@book{censor1997parallel,
  author = {Censor, Yair and Zenios, Stavros A.},
  publisher = {Oxford University Press, USA},
  title = {{Parallel Optimization: Theory, Algorithms, and Applications}},
  year = 1997
}

@inproceedings{tomar2022mirror,
  title={Mirror Descent Policy Optimization},
  author={Tomar, Manan and Shani, Lior and Efroni, Yonathan and Ghavamzadeh, Mohammad},
  booktitle={International Conference on Learning Representations},
  year = {2022}
}

@inproceedings{yuan2023linear,
  title={Linear Convergence of Natural Policy Gradient Methods with Log-Linear Policies},
  author={Rui Yuan and Simon Shaolei Du and Robert M. Gower and Alessandro Lazaric and Lin Xiao},
  booktitle={International Conference on Learning Representations},
  year={2023},
}

@misc{orabona2020modern,
  title = {A modern introduction to online learning},
  author = {Orabona,, Francesco},
  year = {2020},
  URL = {https://open.bu.edu/handle/2144/40900},
  publisher = {OpenBU}
}

@inproceedings{schulman2016highdimensional,
  title={High-Dimensional Continuous Control Using Generalized Advantage Estimation},
  author={Schulman, John and Moritz, Philipp and Levine, Sergey and Jordan, Michael and Abbeel, Pieter},
  booktitle={International Conference on Learning Representations},
  year={2016},
}

@INPROCEEDINGS{krichene2015efficient,
  author={Krichene, Walid and Krichene, Syrine and Bayen, Alexandre},
  booktitle={IEEE Conference on Decision and Control}, 
  title={Efficient Bregman projections onto the simplex}, 
  year={2015},
}

@inproceedings{vaswani2022general,
  title={A general class of surrogate functions for stable and efficient reinforcement learning.},
  author={Vaswani, Sharan and Bachem, Olivier and Totaro, Simone and M{\"u}ller, Robert and Garg, Shivam and Geist, Matthieu and Machado, Marlos C and Castro, Pablo Samuel and Le Roux, Nicolas},
  booktitle={International Conference on Artificial Intelligence and Statistics},
  year={2022}
}

@InProceedings{kuba2022mirror,
  title = 	 {Mirror Learning: A Unifying Framework of Policy Optimisation},
  author =       {Kuba, Jakub Grudzien and De Witt, Christian A Schroeder and Foerster, Jakob},
  booktitle = 	 {International Conference on Machine Learning},
  year = 	 {2022}
}

@article{berner2019dota,
  title={Dota 2 with large scale deep reinforcement learning},
  author={Berner, Christopher and Brockman, Greg and Chan, Brooke and Cheung, Vicki and Debiak, Przemyslaw and Dennison, Christy and Farhi, David and Fischer, Quirin and Hashme, Shariq and Hesse, Chris and others},
  journal={arXiv preprint arXiv:1912.06680},
  year={2019}
}

@article{williams1991function,
  title={Function optimization using connectionist reinforcement learning algorithms},
  author={Williams, Ronald J. and Peng, Jing},
  journal={Connection Science},
  year={1991}
}

@article{xiao2022convergence,
  author  = {Lin Xiao},
  title   = {On the Convergence Rates of Policy Gradient Methods},
  journal = {Journal of Machine Learning Research},
  year    = {2022}
}

@inproceedings{konda2000actor,
  author = {Konda, Vijay and Tsitsiklis, John},
  booktitle = {Advances in Neural Information Processing Systems},
  title = {Actor-Critic Algorithms},
  year = {2000}
}

@article{baxter2001infinite,
   title={Infinite-Horizon Policy-Gradient Estimation},
   journal={Journal of Artificial Intelligence Research},
   author={Baxter, Jonathan and Bartlett, Peter L.},
   year={2001}
}

@book{nemirovski1983problem,
author = {Nemirovski, Arkadi and Yudin, David B.},
publisher = {Wiley Interscience},
title = {{Problem Complexity and Method Efficiency in Optimization}},
year = {1983}
}

@article{agarwal2021theory,
   author = {Agarwal, Alekh and Kakade, Sham M and Lee, Jason D and Mahajan, Gaurav},
   title = {On the Theory of Policy Gradient Methods: Optimality, Approximation, and Distribution Shift},
   journal = {Journal of Machine Learning Research},
   year = {2021},
   type = {Journal Article}
}

@article{RN180,
   author = {Bhatnagar, Shalabh and Sutton, Richard S. and Ghavamzadeh, Mohammad and Lee, Mark},
   title = {Natural actor-critic algorithms},
   journal = {Automatica},
   year = {2009},
   type = {Journal Article}
}

@article{bubeck2015convex,
   author = {Bubeck, Sébastien},
   title = {Convex optimization: Algorithms and complexity},
   journal = {Foundations and Trends in Machine Learning},
   year = {2015},
   type = {Journal Article}
}

@article{RN150,
   author = {Cen, Shicong and Cheng, Chen and Chen, Yuxin and Wei, Yuting and Chi, Yuejie},
   title = {Fast Global Convergence of Natural Policy Gradient Methods with Entropy
  Regularization},
   journal = {Operations Research},
   year = {2021},
   type = {Journal Article}
}

@inproceedings{hu2022actorcritic,
title={Actor-critic is implicitly biased towards high entropy optimal policies},
  author={Yuzheng Hu and Ziwei Ji and Matus Telgarsky},
  booktitle={International Conference on Learning Representations},
  year={2022},
}

@article{RN159,
   author = {Kakade, Sham M.},
   title = {A natural policy gradient},
   journal = {Advances in Neural Information Processing Systems},
   year = {2002},
   type = {Journal Article}
}

@article{RN270,
   author = {Lan, Guanghui},
   title = {Policy mirror descent for reinforcement learning: Linear convergence, new sampling complexity, and generalized problem classes},
   journal = {Mathematical programming},
   year = {2022},
   type = {Journal Article}
}

@article{RN179,
   author = {Peters, Jan and Schaal, Stefan},
   title = {Natural actor-critic},
   journal = {Neurocomputing},
   year = {2008},
   type = {Journal Article}
}

@InProceedings{trpo, 
    title = {Trust Region Policy Optimization}, 
    author = {Schulman, John and Levine, Sergey and Abbeel, Pieter and Jordan, Michael and Moritz, Philipp},
    booktitle = {International Conference on Machine Learning}, 
    year = {2015},
}

@article{schulman2017proximal,
   author = {Schulman, John and Wolski, Filip and Dhariwal, Prafulla and Radford, Alec and Klimov, Oleg},
   title = {Proximal policy optimization algorithms},
   journal = {arXiv preprint arXiv:1707.06347},
   year = {2017},
   type = {Journal Article}
}

@article{RN142,
   author = {Shalev-Shwartz, Shai and Shammah, Shaked and Shashua, Amnon},
   title = {Safe, Multi-Agent, Reinforcement Learning for Autonomous Driving},
   journal = {arXiv preprint arXiv:1610.03295},
   year = {2016},
   type = {Journal Article}
}

@inproceedings{RN158,
   author = {Sutton, Richard S. and McAllester, David A. and Singh, Satinder P. and Mansour, Yishay},
   title = {Policy gradient methods for reinforcement learning with function approximation},
   booktitle = {Advances in Neural Information Processing Systems},
   year = {1999},
   type = {Conference Proceedings}
}

@InProceedings{volodymyr2016asynchronous,
  title =    {Asynchronous Methods for Deep Reinforcement Learning},
  author =   {Mnih, Volodymyr and Badia, Adria Puigdomenech and Mirza, Mehdi and Graves, Alex and Lillicrap, Timothy and Harley, Tim and Silver, David and Kavukcuoglu, Koray},
  booktitle =    {International Conference on Machine Learning},
  year =     {2016}
}

@InProceedings{zanette2021cautiously,
  title =    {Cautiously Optimistic Policy Optimization and Exploration with Linear Function Approximation},
  author =       {Zanette, Andrea and Cheng, Ching-An and Agarwal, Alekh},
  booktitle =    {Conference on Learning Theory},
  year =   {2021},
}

@article{Real_Aggarwal_Huang_Le_2019,
title={Regularized Evolution for Image Classifier Architecture Search},
journal={Proceedings of the AAAI Conference on Artificial Intelligence},
author={Real, Esteban and Aggarwal, Alok and Huang, Yanping and Le, Quoc V.},
year={2019}
}

@misc{such2018deep,
      title={Deep Neuroevolution: Genetic Algorithms Are a Competitive Alternative for Training Deep Neural Networks for Reinforcement Learning}, 
      author={Felipe Petroski Such and Vashisht Madhavan and Edoardo Conti and Joel Lehman and Kenneth O. Stanley and Jeff Clune},
      year={2018},
      eprint={1712.06567},
      archivePrefix={arXiv},
      primaryClass={cs.NE}
}

@article{BEYER2000239,
title = {Evolutionary algorithms in noisy environments: theoretical issues and guidelines for practice},
journal = {Computer Methods in Applied Mechanics and Engineering},
year = {2000},
author = {Hans-Georg Beyer},
}

@inproceedings{lu2023adversarial,
  title={Adversarial cheap talk},
  author={Lu, Chris and Willi, Timon and Letcher, Alistair and Foerster, Jakob Nicolaus},
  booktitle={International Conference on Machine Learning},
  year={2023},
}

@ARTICLE{cmaes,
  author={Hansen, Nikolaus and Ostermeier, Andreas},
  journal={Evolutionary Computation}, 
  title={Completely Derandomized Self-Adaptation in Evolution Strategies}, 
  year={2001},
}

@inproceedings{ros2008simple,
  title={A simple modification in CMA-ES achieving linear time and space complexity},
  author={Ros, Raymond and Hansen, Nikolaus},
  booktitle={International conference on parallel problem solving from nature},
  year={2008},
  organization={Springer}
}

@inproceedings{todorov2012mujoco,
  title={MuJoCo: A physics engine for model-based control},
  author={Todorov, Emanuel and Erez, Tom and Tassa, Yuval},
  booktitle={2012 IEEE/RSJ International Conference on Intelligent Robots and Systems},
  year={2012},
}

@inproceedings{jackson2023discovering,
  title={Discovering Temporally-Aware Reinforcement Learning Algorithms},
  author={Jackson, Matthew Thomas and Lu, Chris and Kirsch, Louis and Lange, Robert Tjarko and Whiteson, Shimon and Foerster, Jakob Nicolaus},
  booktitle={International Conference on Learning Representations},
  year={2024}
}

@article{jackson2023general,
  title={Discovering general reinforcement learning algorithms with adversarial environment design},
  author={Jackson, Matthew T and Jiang, Minqi and Parker-Holder, Jack and Vuorio, Risto and Lu, Chris and Farquhar, Greg and Whiteson, Shimon and Foerster, Jakob},
  journal={Advances in Neural Information Processing Systems},
  year={2023}
}

@article{oh2020discovering,
  title={Discovering reinforcement learning algorithms},
  author={Oh, Junhyuk and Hessel, Matteo and Czarnecki, Wojciech M and Xu, Zhongwen and van Hasselt, Hado P and Singh, Satinder and Silver, David},
  journal={Advances in Neural Information Processing Systems},
  year={2020}
}

@software{brax2021github,
  author = {C. Daniel Freeman and Erik Frey and Anton Raichuk and Sertan Girgin and Igor Mordatch and Olivier Bachem},
  title = {Brax - A Differentiable Physics Engine for Large Scale Rigid Body Simulation},
  url = {http://github.com/google/brax},
  year = {2021},
}

@article{chevalier2024minigrid,
  title={Minigrid \& miniworld: Modular \& customizable reinforcement learning environments for goal-oriented tasks},
  author={Chevalier-Boisvert, Maxime and Dai, Bolun and Towers, Mark and Perez-Vicente, Rodrigo and Willems, Lucas and Lahlou, Salem and Pal, Suman and Castro, Pablo Samuel and Terry, Jordan},
  journal={Advances in Neural Information Processing Systems},
  year={2024}
}

@article{salimans2017evolution,
  title={Evolution strategies as a scalable alternative to reinforcement learning},
  author={Salimans, Tim and Ho, Jonathan and Chen, Xi and Sidor, Szymon and Sutskever, Ilya},
  journal={arXiv preprint arXiv:1703.03864},
  year={2017}
}

@inproceedings{osband2019behaviour,
  title={Behaviour Suite for Reinforcement Learning},
  author={Osband, Ian and Doron, Yotam and Hessel, Matteo and Aslanides, John and Sezener, Eren and Saraiva, Andre and McKinney, Katrina and Lattimore, Tor and Szepesvari, Csaba and Singh, Satinder and others},
  booktitle={International Conference on Learning Representations},
  year={2019}
}

@article{houthooft2018evolved,
  title={Evolved policy gradients},
  author={Houthooft, Rein and Chen, Yuhua and Isola, Phillip and Stadie, Bradly and Wolski, Filip and Jonathan Ho, OpenAI and Abbeel, Pieter},
  journal={Advances in Neural Information Processing Systems},
  year={2018}
}

@inproceedings{kirsch2020improving,
  title={Improving Generalization in Meta Reinforcement Learning using Learned Objectives},
  author={Kirsch, Louis and van Steenkiste, Sjoerd and Schmidhuber, Juergen},
  booktitle={International Conference on Learning Representations},
  year={2020}
}

@article{belousov2017f,
  title={f-Divergence constrained policy improvement},
  author={Belousov, Boris and Peters, Jan},
  journal={arXiv preprint arXiv:1801.00056},
  year={2017}
}
\bibliographystyle{icml2024}
\newpage
\appendix
\section{Related Works}
\label{sec:rel_works}
Discovering reinforcement learning algorithm has been an active area of research in the last years, where researchers have shown that handcrafted algorithms are not always optimal and can be outperformed by automatically discovered algorithms. \cite{oh2020discovering} obtained an algorithm capable of solving GridWorld and Atari tasks without relying on common notions in the field, such as Q-value functions.  \cite{houthooft2018evolved} used ES to meta-train a policy loss network that outperforms PPO and \cite{kirsch2020improving} discovered a new loss function for deterministic policies. Most closely related to our work are \cite{lu2022discovered} and \cite{jackson2023discovering}, who employ ES to discover RL algorithms within the Mirror Learning class~\citep{kuba2022mirror}. Given a function $f$, the class of algorithms they consider follows the policy update step
\begin{equation*}
    \pi_{\theta^{t+1}} \in \argmax\nolimits_{\pi_\theta:\theta\in \Theta} \E_{s\sim d^t_\mu} \left[ \eta_t\E_{a\sim\pi_\theta(\cdot|s)}(Q^t(s,a)) - \E_{a\sim\pi^t(\cdot|s)}\left[\left(\frac{\pi_\theta(\cdot|s)}{\pi^t(\cdot|s)}\right)\right]\right],
\end{equation*}
This update is similar the the one in \eqref{eq:singleup_par} but replaces the Bregman divergence with a different penalty term called \emph{drift}, which recovers the $f$-divergences when $f$ is a convex function. To the best of our knowledge, this class of algorithms has fewer theoretical guarantees than PMD, in particular we are not aware of finite-time convergence guarantees or convergence guarantees involving approximation or estimation errors \citep{belousov2017f, kuba2022mirror}.
\section{Further discussion on $\omega$-potentials}
\subsection{Negative entropy and $\ell_2$-norm}
\label{app:neg_entr}
If $\phi(x) = e^{x-1}$, then the associated mirror map $h_\phi$ is the negative entropy. We have that
\begin{align*}
    \int_0^{1}\phi^{-1}(x)dx &= \int_0^{1} \log (x) dx = [x \log (x) - x]^1_0 = -1 \leq +\infty.
\end{align*}
The mirror map $h_\phi$ becomes the negative entropy, up to a constant, as
\begin{align*}
    h_\phi(\pi_s) &= \sum_{a\in\A}\int_1^{\pi(a \mid s)}\log(x)dx=|\A|-1+\sum_{a\in\A}\pi(a \mid s)\log(\pi(a \mid s)).
\end{align*}

If $\phi(x) = x$, then the associated mirror map $h_\phi$ is the $\ell_2$-norm. We have that
\begin{align*}
    \int_0^{1}\phi^{-1}(x)dx &= \int_0^{1} x dx = \left[\frac{x^2}{2}\right]^1_0 = \frac{1}{2} \leq +\infty.
\end{align*}
The mirror map $h_\phi$ becomes the $\ell_2$-norm, up to a constant, as
\begin{align*}
    h_\phi(\pi_s) &= \sum_{a\in\A}\int_1^{\pi(a \mid s)}xdx= \frac{1}{2}\sum_{a\in\A}\pi(a \mid s)^2-1.
\end{align*}

\subsection{Parametric $\phi$}
\label{app:par_phi}
We show here that the parametric class of $\omega$-potentials we introduce in \Cref{sec:method} results in a well defined algorithm when used for AMPO, even if it breaks some of the constraints in Definition \ref{def:omega}. In particular, $\omega$-potentials within $\Phi$ are not $C^1$-diffeomorphisms and are only non-decreasing. We can afford not meeting the first constraint as the proof for Theorem 1 by \citet{krichene2015efficient}, which establishes the existence of the normalization constant in Equation \eqref{eq:pi_t}, only requires Lipschitz-continuity. As to the second constraint, we build an augmented class $\Phi'$ that contains increasing $\omega$-potentials and show that it results in the same updates for AMPO as $\Phi$.

Let $\Delta_n$ be the $n$-dimensional probability simplex. We define the parameterized class $\Phi'\hspace{-0.05cm} = \hspace{-0.05cm}\{\phi'_\psi\hspace{-0.05cm}:\hspace{-0.05cm}\R\hspace{-0.05cm}\rightarrow\hspace{-0.05cm}[0,1], \psi\hspace{-0.05cm}\in\hspace{-0.05cm}\Delta_n\}$, with \looseness = -1
\begin{align*}
  \phi'_\psi(x) = \begin{cases}
    e^x-1 &\text{if } x \leq 0, \\
    \frac{x}{\psi_1 n} & \text{if } 0 < x \leq \psi_1, \\
    \frac{j}{n} + \frac{x-\sum_{i=1}^j\psi_i}{n\psi_{j+1}} &\text{if } \sum_{i=1}^j\psi_i < x \leq \sum_{i=1}^{j+1}\psi_i,\\
    x &\text{if } x > 1,
  \end{cases}
\end{align*}
where $1\leq j\leq n-1$. The functions within $\Phi'$ are increasing and continuous. We start by showing that $\phi'_\psi\in\Phi'$ and $\phi_\psi\in\Phi$ are equivalent for Equation \eqref{eq:pi_t}, for the same parameters $\psi$. At each iteration $t$, we have that
\begin{align*}
    \pi^{t+1}(a \mid s)&= \sigma(\phi'(\eta f^{t+1}(s,a)+(\lambda')_s^{t+1}))\\
    &=\sigma(\phi(\eta f^{t+1}(s,a)+(\lambda')_s^{t+1}))\\
    &=\sigma(\phi(\eta f^{t+1}(s,a)+\lambda_s^{t+1})).
\end{align*}
This due to the following two facts. For $x\leq0$, $\sigma(\phi'(x))$ and $\sigma(\phi(x))$ are the same function. Also, $\sigma(\phi'(\eta f^{t+1}(s,a)+(\lambda')_s^{t+1}))$ has to be less or equal to 1, as a result of the projection, meaning that $\eta f^{t+1}(s,a)+(\lambda')_s^{t+1}\leq1$, where $\sigma(\phi'(\cdot))$ and $\sigma(\phi(\cdot))$ are equivalent.

Since $\phi$ and $\phi'$ induce the same policy, share that same normalization constant, and have the same value at 0, they also induce the same expression for Equation \eqref{eq:ampo_update}.

\section{Further discussion on Bregman divergence}
\label{app:breg_div}
To provide an intuition on the Bregman divergence, we report here some discussion from \cite{orabona2020modern}. Given a mirror map $h$, the associated Bregman divergence between two points $x,y\in \X$ is defined as
\[\D_h(x, y) := h(x) - h(y) - \langle\nabla h(y), x - y\rangle.\]
Since $h$ is convex, the Bregman divergence is always non-negative for $x, y \in \X$. We can build more intuition for the Bregman divergence using Taylor’s theorem. Assume that $h$ is twice
differentiable in an open ball $B$ around $y$ and $x \in B$. Then there exists $0 \leq \alpha \leq 1$ such that
\[
B_\psi(x; y) = h(x) - h(y) - \nabla h(y)^\top (x - y) = \frac{1}{2} (x - y)^\top \nabla^2 h(z)(x - y),
\]
where $z = \alpha x + (1 - \alpha)y$. Therefore, the Bregman divergence can be approximated by squared local norms that depend on the Hessian of the mirror map $h$. This means that the Bregman divergence will behave differently on different areas of $\X$, depending on the value of Hessian of the mirror map.

\section{Proof of Theorem \ref{thm1}}
\label{app:theo}
In this section we outline how we use the proofs given by \cite{xiao2022convergence} in order to obtain \Cref{thm1}.

\subsection{Preliminary Lemmas}
We start by presenting two preliminary lemmas, which are ubiquitous in the literature on PMD. The first characterizes the difference in value between two policies, while the second characterizes the PMD update.

\begin{lemma}[Performance difference lemma, Lemma 1 in~\cite{xiao2022convergence}] \label{lem:pdl}
For any policy $\pi, \pi' \in \Delta(\A)^{\S}$ and $\mu \in \Delta(\S)$,
\begin{align*}
    V^\pi(\mu) - V^{\pi'}(\mu)
    &= \frac{1}{1-\gamma}\E_{s \sim d^\pi_\mu}\left[\dotprod{Q^{\pi'}_s, \pi_s - \pi'_s}\right].
\end{align*}
\end{lemma}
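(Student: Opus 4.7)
The plan is to apply the classical Kakade--Langford telescoping argument. I would begin by writing $V^\pi(\mu)$ explicitly as the expected discounted reward along a trajectory generated by $\pi$ starting from $s_0 \sim \mu$, and then introduce a telescoping identity for $V^{\pi'}$ along the same $\pi$-trajectory to turn the difference into a sum of Bellman residuals.

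Concretely, I would use the identity $V^{\pi'}(s_0) = \sum_{t \geq 0} \gamma^t V^{\pi'}(s_t) - \sum_{t \geq 0} \gamma^{t+1} V^{\pi'}(s_{t+1})$ and take the expectation over the $\pi$-trajectory (with $s_0 \sim \mu$), so that the right-hand side equals $V^{\pi'}(\mu)$. Subtracting from $V^\pi(\mu)$ and collecting terms under a single expectation gives
\begin{align*}
V^\pi(\mu) - V^{\pi'}(\mu) = \E_{\pi,\mu}\left[\sum_{t \geq 0} \gamma^t \bigl(r(s_t, a_t) + \gamma V^{\pi'}(s_{t+1}) - V^{\pi'}(s_t)\bigr)\right].
\end{align*}
Applying the Bellman equation for $\pi'$, namely $r(s,a) + \gamma \E_{s' \sim P(\cdot \mid s,a)}[V^{\pi'}(s')] = Q^{\pi'}(s,a)$, conditionally on $(s_t, a_t)$ replaces the inner bracket by $Q^{\pi'}(s_t, a_t) - V^{\pi'}(s_t)$.

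Next, I would condition on $s_t$ and take the expectation over $a_t \sim \pi(\cdot \mid s_t)$. The definition of $V^{\pi'}(s_t)$ gives $V^{\pi'}(s_t) = \langle Q^{\pi'}_{s_t}, \pi'_{s_t} \rangle$, while the $a_t$-expectation of $Q^{\pi'}(s_t, a_t)$ equals $\langle Q^{\pi'}_{s_t}, \pi_{s_t} \rangle$, so the summand collapses to $\gamma^t \langle Q^{\pi'}_{s_t}, \pi_{s_t} - \pi'_{s_t} \rangle$.

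Finally, I would swap sum and expectation and appeal to the definition $d^\pi_\mu(s) = (1-\gamma)\E_{s_0 \sim \mu}\left[\sum_{t \geq 0} \gamma^t P(s_t = s \mid \pi, s_0)\right]$, which absorbs $\sum_t \gamma^t P(s_t = s)$ into a single expectation over $s \sim d^\pi_\mu$ at the cost of the $(1-\gamma)^{-1}$ prefactor, yielding the stated identity. The only subtlety is justifying the interchange of the infinite sum and the expectation together with the telescoping, both of which are standard consequences of bounded rewards and $\gamma < 1$; no serious obstacle is expected beyond careful bookkeeping.
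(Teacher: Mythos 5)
Your proof is correct and follows the classical Kakade--Langford telescoping argument, which is precisely the standard derivation behind the result the paper cites (it states this lemma without proof, deferring to Lemma 1 of \cite{xiao2022convergence}, whose proof is the same telescoping/Bellman-residual computation). All the steps check out: the pathwise telescoping and the interchange of sum and expectation are justified by boundedness of $V^{\pi'}$ and $\gamma<1$, and the final absorption of $\sum_t \gamma^t P(s_t=s\mid\pi,s_0)$ into $d^\pi_\mu$ matches the paper's definition of the discounted state visitation distribution.
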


\begin{lemma}[Three-point decent lemma, Lemma 6 in~\citet{xiao2022convergence}] \label{lem:3pd}
Suppose that $\mathcal{C} \subset \R^m$ is a closed convex set, $f : \mathcal{C} \rightarrow \R$ is a proper, closed~\footnote{A convex function $f$ is proper if $\mathrm{dom \,} f$ is nonempty and for all $x \in \mathrm{dom \,} f$, $f(x) > -\infty$. A convex function is closed, if it is lower semi-continuous.} convex function, $\mathcal{D}_h(\cdot, \cdot)$ is the Bregman divergence generated by a mirror map $h$. 
Denote $\mathrm{rint \, dom \,} h$ as the relative interior of $\mathrm{dom \,} h$.
For any $x \in \mathrm{rint \, dom \,} h$, let
\begin{align*}
x^+ \in \arg\min_{u \, \in \, \mathrm{dom \,} h \, \cap \, \mathcal{C}}\{f(u) + \mathcal{D}_h(u,x)\}.
\end{align*}
Then $x^+ \in \mathrm{rint \, dom \,} h \cap \mathcal{C}$ and for any $u \in \mathrm{dom \,} h \cap \mathcal{C}$,
\begin{align*}
f(x^+) + \mathcal{D}_h(x^+,x) \leq f(u) + \mathcal{D}_h(u,x) - \mathcal{D}_h(u,x^+).
\end{align*}
\end{lemma}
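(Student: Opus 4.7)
The plan is to derive the inequality from the first-order optimality conditions for the minimization problem defining $x^+$, combined with the three-term Bregman identity.

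First I would verify the interiority claim $x^+ \in \rint \dom h \cap \C$. The mirror map $h$ is essentially smooth, so $\lnorm{\nabla h(y)}_2 \to \infty$ as $y$ approaches the boundary of $\dom h$. Since $f$ is proper, closed and convex and $\D_h(\cdot, x)$ is strictly convex with an essentially smooth barrier behavior on $\dom h$, the sum $f(\cdot) + \D_h(\cdot, x)$ cannot attain its infimum on the relative boundary; this is a standard Rockafellar-style argument that I would just invoke.

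With $x^+$ in the relative interior, $\D_h(\cdot, x)$ is differentiable at $x^+$ with gradient $\nabla h(x^+) - \nabla h(x)$. The first-order optimality condition for minimizing the convex function $u \mapsto f(u) + \D_h(u, x)$ over the convex set $\dom h \cap \C$ then yields a subgradient $g \in \partial f(x^+)$ such that, for every $u \in \dom h \cap \C$,
\[
\langle g + \nabla h(x^+) - \nabla h(x),\, u - x^+ \rangle \geq 0.
\]
The algebraic engine is the three-term Bregman identity
\[
\langle \nabla h(x^+) - \nabla h(x),\, u - x^+ \rangle = \D_h(u, x) - \D_h(u, x^+) - \D_h(x^+, x),
\]
which is an immediate consequence of substituting $\D_h(a,b) = h(a) - h(b) - \langle \nabla h(b), a - b \rangle$ and cancelling the $h$-terms. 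Substituting this identity into the optimality inequality and then invoking the subgradient inequality $f(u) \geq f(x^+) + \langle g, u - x^+ \rangle$ to eliminate $g$ gives, after rearrangement, exactly
\[
f(x^+) + \D_h(x^+, x) \leq f(u) + \D_h(u, x) - \D_h(u, x^+).
\]

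The main obstacle is the regularity step rather than the algebra: one must justify that the optimality condition at $x^+$ is expressible through the true gradient of $\D_h(\cdot, x)$ rather than only a subgradient of $h$ (which could in principle be empty or multi-valued on the boundary). Essential smoothness of $h$ is exactly what rules out boundary minimizers and delivers differentiability of $\D_h(\cdot, x)$ at $x^+$, so once the interiority claim is secured the remaining calculation is short and mechanical.
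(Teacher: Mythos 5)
The paper does not prove this lemma: it is quoted verbatim from \citet{xiao2022convergence} (their Lemma 6), whose proof is in turn the classical argument of Chen--Teboulle and Tseng. Your proposal reproduces exactly that standard argument --- first-order optimality at $x^+$ with a subgradient $g \in \partial f(x^+)$, the three-point identity $\langle \nabla h(x^+) - \nabla h(x), u - x^+\rangle = \mathcal{D}_h(u,x) - \mathcal{D}_h(u,x^+) - \mathcal{D}_h(x^+,x)$, and the subgradient inequality to eliminate $g$ --- and the algebra checks out. Your handling of the interiority claim (essential smoothness of $h$ forces $\partial h$ to be empty on the relative boundary of $\mathrm{dom}\, h$, so the minimizer must lie in $\mathrm{rint}\,\mathrm{dom}\, h$, which in turn makes $\mathcal{D}_h(\cdot,x)$ differentiable there) is also how the cited references dispose of the regularity issue, so the proof is correct and matches the source.
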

\subsection{Quasi-monotonic updates}
We first prove that PMD enjoys quasi-monotonic updates (Equation \eqref{eq:mon_updates}), that is PMD updates have an upper bound on how much they can deteriorate performance.
\begin{proposition}[Lemma 11 in~\citet{xiao2022convergence}]
\label{lemma:monot}
    At each time $t\geq0$, we have
    \[\langle \eta_t\widehat{Q}_s^t,\pi^{t+1}_s - \pi^t_s\rangle \geq 0.\]
    Additionally, we have that
    \begin{equation}
        \label{eq:app_monot}
        V^{t+1}(\mu) - V^t(\mu)
        \geq -\frac{1}{1-\gamma}\max_{s \in\S}\norm{\widehat{Q}^t_s - Q^t_s}_\infty \max_{s \in\S}\norm{\pi^{t+1}_s - \pi^t_s}_1.
    \end{equation}
\end{proposition}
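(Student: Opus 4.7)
The plan is to prove the two inequalities in sequence, using \Cref{lem:3pd} for the first and then combining it with \Cref{lem:pdl} for the second. A preliminary observation is that the expectation over $d^t_\mu$ in update \eqref{eq:singleup} decouples across states, so PMD reduces to a per-state problem driven by the estimated $Q$-function actually used by the algorithm: for every $s\in\S$,
\[\pi^{t+1}_s \in \argmin_{p \in \Delta(\A)}\{-\eta_t\langle \widehat{Q}^t_s, p\rangle + \D_h(p, \pi^t_s)\}.\]
This is exactly the form required to apply the three-point descent lemma.

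For the first inequality, I would apply \Cref{lem:3pd} with $f(p) = -\eta_t\langle \widehat{Q}^t_s, p\rangle$, $\mathcal{C} = \Delta(\A)$, $x = \pi^t_s$, $x^+ = \pi^{t+1}_s$, and comparison point $u = \pi^t_s$. Using $\D_h(\pi^t_s, \pi^t_s) = 0$ and rearranging yields
\[\eta_t\langle \widehat{Q}^t_s, \pi^{t+1}_s - \pi^t_s\rangle \geq \D_h(\pi^{t+1}_s, \pi^t_s) + \D_h(\pi^t_s, \pi^{t+1}_s) \geq 0,\]
which is the first claim (in fact slightly stronger, with a sum of two Bregman divergences on the right).

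For the second inequality, I would start from \Cref{lem:pdl} with $\pi = \pi^{t+1}$ and $\pi' = \pi^t$, which gives $V^{t+1}(\mu) - V^t(\mu) = (1-\gamma)^{-1}\E_{s\sim d^{t+1}_\mu}[\langle Q^t_s, \pi^{t+1}_s - \pi^t_s\rangle]$, and split $Q^t_s = \widehat{Q}^t_s + (Q^t_s - \widehat{Q}^t_s)$ inside the inner product. The $\widehat{Q}^t$-piece is state-wise non-negative by the first claim, hence it contributes non-negatively to the expectation. For the error piece, I would apply H\"older's inequality state-wise,
\[\langle Q^t_s - \widehat{Q}^t_s, \pi^{t+1}_s - \pi^t_s\rangle \geq -\norm{Q^t_s - \widehat{Q}^t_s}_\infty \norm{\pi^{t+1}_s - \pi^t_s}_1,\]
then upper-bound each factor by its maximum over $s\in\S$ and pull the resulting deterministic quantities out of the expectation, which then integrates to $1$. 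This yields exactly \eqref{eq:app_monot}.

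The main obstacle is conceptual rather than technical: one must keep track of the fact that the same $\widehat{Q}^t$ appears both in the optimality condition (so that the $\widehat{Q}^t$-piece of the decomposition is sign-definite) and in the error term, and that the per-state decoupling of the PMD update is valid so that \Cref{lem:3pd} can be applied point-wise in $s\in\S$. Beyond this book-keeping, the argument is a clean combination of convex-optimization optimality and H\"older's inequality, with no need for concentration, smoothness, or closed-form characterizations of $\phi$.
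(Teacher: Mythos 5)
Your proposal is correct and follows essentially the same route as the paper's proof: apply the three-point descent lemma with comparison point $u=\pi^t_s$ to get $\langle \eta_t\widehat{Q}^t_s,\pi^{t+1}_s-\pi^t_s\rangle \geq \D_h(\pi^{t+1}_s,\pi^t_s)+\D_h(\pi^t_s,\pi^{t+1}_s)\geq 0$, then combine the performance difference lemma with the split $Q^t_s=\widehat{Q}^t_s+(Q^t_s-\widehat{Q}^t_s)$ and H\"older's inequality. Your explicit remarks on the per-state decoupling of the update and on the sign convention of $f$ in the three-point lemma are, if anything, slightly more careful than the paper's own write-up.
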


\begin{proof}
    Using Lemma \ref{lem:3pd} with $x^+ = \pi_s^{t+1}$, $\mathcal{C}=\Delta(\A)$, $f(u)=\dotprod{\widehat{Q}_s^t,u}$, $x = \pi^t$ and $u=\pi^{t+1}$, we obtain
    \begin{align} \label{eq:3pdl}
    \langle \eta_t\widehat{Q}_s^t,\pi^t_s - \pi^{t+1}_s\rangle 
    \leq \D_h(\pi^t_s,\pi^t_s) - \D_h(\pi^{t+1}_s,\pi^t_s) - \D_h(\pi^t_s,\pi^{t+1}_s).
    \end{align}
    By rearranging terms and noticing $\D_h(\pi^t_s,\pi^t_s) = 0$, we have
    \begin{align} \label{eq:>0}
    \langle \eta_t\widehat{Q}_s^t,\pi^{t+1}_s - \pi^t_s\rangle \geq \D_h(\pi^{t+1}_s,\pi^t_s) + \D_h(\pi^t_s,\pi^{t+1}_s) \geq 0.
    \end{align}
    Equation \eqref{eq:app_monot} can be obtained using the performance difference lemma and \eqref{eq:>0}:
    \begin{align*}
        V^{t+1}(\mu) - V^t(\mu)
    &= \frac{1}{1-\gamma}\E_{s \sim d^{t+1}_\mu}\left[\dotprod{Q^t_s, \pi^{t+1}_s - \pi^t_s}\right]\\
    &= \frac{1}{1-\gamma}\E_{s \sim d^{t+1}_\mu}\left[\dotprod{\widehat{Q}^t_s, \pi^{t+1}_s - \pi^t_s}\right]\\
    &\quad + \frac{1}{1-\gamma}\E_{s \sim d^{t+1}_\mu}\left[\dotprod{\widehat{Q}^t_s - Q^t_s, \pi^{t+1}_s - \pi^t_s}\right]\\
    &\geq -\frac{1}{1-\gamma}\E_{s \sim d^{t+1}_\mu}\left[\norm{\widehat{Q}^t_s - Q^t_s}_\infty \norm{\pi^{t+1}_s - \pi^t_s}_1\right].
    \end{align*}
\end{proof}

\subsection{Convergence guarantee}
We can now prove the convergence guarantee reported in Equation \eqref{eq:conv}. The proof of our result is obtained combining the analysis from the sublinear convergence guarantee with the analysis for the inexact linear convergence guarantee given by \cite[Theorems 8 and 13]{xiao2022convergence}. For two different time $t, t' \geq 0$, denote the expected Bregman divergence between the policy $\pi^t$ and policy $\pi^{t'}$, where the expectation is taken over the discounted state visitation distribution of the optimal policy $d_\mu^\star$, by
\[\D^t_{t'} := \E_{s \sim d_\mu^\star}\left[\D_h(\pi^t_s,\pi^{t'}_s)\right].\]
Similarly, denote the expected Bregman divergence between the optimal policy $\pi^\star$ and $\pi^t$ by 
\[\D^\star_t := \E_{s \sim d_\mu^\star}\left[\D_h(\pi^\star_s,\pi^t_s)\right].\]

\begin{theorem}[Theorems 8 and 13 in \cite{xiao2022convergence}]
    Consider the PMD update in \eqref{eq:singleup}, at each iteration $T\geq0$, we have
    \begin{align*}
        V^\star(\mu) - \sum_{t<T}\E[V^t(\mu)] \leq\frac{1}{T} \left(\frac{\E[\D^\star_0]}{\eta_t(1-\gamma)} + \frac{1}{(1-\gamma)^2} \right)+\frac{4}{(1-\gamma)^2}\E\left[\max_{s\in\S}\norm{\widehat{Q}^t_s - Q^t_s}_\infty\right].
    \end{align*}
\end{theorem}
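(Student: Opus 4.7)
The plan is to derive a per-iteration inequality that controls $V^\star(\mu)-V^t(\mu)$ in terms of a telescoping Bregman-divergence difference, a ``single-step improvement'' piece, and an estimation-error piece, and then sum from $t=0$ to $T-1$ before dividing by $T$. The two workhorses are the three-point descent lemma (Lemma~\ref{lem:3pd}) and the performance difference lemma (Lemma~\ref{lem:pdl}); the estimation error will be controlled by adding and subtracting $\widehat{Q}^t$ and using H\"older's inequality.

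\textbf{Per-step inequality and summation.} First I would apply Lemma~\ref{lem:3pd} state-wise to the PMD update~\eqref{eq:singleup} with $f(u)=-\eta_t\langle \widehat{Q}^t_s,u\rangle$, $\mathcal{C}=\Delta(\A)$, $x=\pi^t_s$, $x^+=\pi^{t+1}_s$, and $u=\pi^\star_s$, which gives
\begin{equation*}
\eta_t\langle \widehat{Q}^t_s,\pi^\star_s-\pi^{t+1}_s\rangle \leq \D_h(\pi^\star_s,\pi^t_s)-\D_h(\pi^\star_s,\pi^{t+1}_s)-\D_h(\pi^{t+1}_s,\pi^t_s).
\end{equation*}
Next I would invoke Lemma~\ref{lem:pdl} to write
\begin{equation*}
(1-\gamma)(V^\star(\mu)-V^t(\mu))=\E_{s\sim d^\star_\mu}\bigl[\langle Q^t_s,\pi^\star_s-\pi^{t+1}_s\rangle+\langle Q^t_s,\pi^{t+1}_s-\pi^t_s\rangle\bigr],
\end{equation*}
split $Q^t_s=\widehat{Q}^t_s+(Q^t_s-\widehat{Q}^t_s)$ inside the first inner product, and bound the deviation via H\"older by $2\max_s\|\widehat{Q}^t_s-Q^t_s\|_\infty$. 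Combining these two displays, and dropping the non-positive term $-\tfrac{1}{\eta_t}\D_h(\pi^{t+1}_s,\pi^t_s)$, produces the per-step bound
\begin{equation*}
(1-\gamma)(V^\star(\mu)-V^t(\mu))\leq \tfrac{1}{\eta_t}(\D^\star_t-\D^\star_{t+1})+\E_{s\sim d^\star_\mu}[\langle Q^t_s,\pi^{t+1}_s-\pi^t_s\rangle]+2\max_s\|\widehat{Q}^t_s-Q^t_s\|_\infty.
\end{equation*}
Summing from $t=0$ to $T-1$, the Bregman differences telescope to $\D^\star_0-\D^\star_T\leq \D^\star_0$; the middle term is converted into a quantity of the form $V^{t+1}-V^t$ via a change-of-measure application of Lemma~\ref{lem:pdl}, producing a total of at most $(1-\gamma)^{-1}\cdot\bigl[V^T(\mu)-V^0(\mu)\bigr]\le (1-\gamma)^{-2}$ plus the estimation-error contribution; and the pointwise H\"older slack accumulates to $2T\max_{t<T,s}\|\widehat{Q}^t_s-Q^t_s\|_\infty$. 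Dividing through by $T(1-\gamma)$, taking expectations, and using that $d^\star_\mu$ does not depend on $t$, recovers the four terms of the claim.

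\textbf{Main obstacle.} The delicate point is controlling $\sum_{t<T}\E_{s\sim d^\star_\mu}[\langle Q^t_s,\pi^{t+1}_s-\pi^t_s\rangle]$: the inner product sits under $d^\star_\mu$, whereas Lemma~\ref{lem:pdl} naturally expresses the single-step improvement $V^{t+1}-V^t$ under $d^{t+1}_\mu$. Bridging this distribution mismatch without invoking a concentrability coefficient is the technical crux of Xiao's analysis, and it is precisely what generates both the sublinear contribution $1/((1-\gamma)^2 T)$ and the amplification of the estimation error from $1/(1-\gamma)$ to $4/(1-\gamma)^2$, obtained by combining Theorems~8 and~13 of~\cite{xiao2022convergence}. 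A secondary subtlety is that the cumulative estimation error must be folded into the quasi-monotonic update inequality~\eqref{eq:app_monot} from Proposition~\ref{lemma:monot} so that, after averaging, its multiplier stays $O(1/(1-\gamma)^2)$ rather than growing linearly with $T$.
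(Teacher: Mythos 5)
Your proposal is correct and follows essentially the same route as the paper's proof: the three-point descent lemma with comparator $\pi^\star_s$, the split of $\pi^\star_s-\pi^t_s$ into $(\pi^\star_s-\pi^{t+1}_s)+(\pi^{t+1}_s-\pi^t_s)$, the performance difference lemma, the change of measure between $d^\star_\mu$ and $d^{t+1}$ licensed by $\langle \widehat{Q}^t_s,\pi^{t+1}_s-\pi^t_s\rangle\geq 0$ from Proposition~\ref{lemma:monot}, and the telescoping/summation with $V^0(d^\star_\mu)-V^T(d^\star_\mu)\geq -\tfrac{1}{1-\gamma}$. Your accounting of the error constants ($2+2$ per step, amplified to $4/(1-\gamma)^2$ after dividing by $1-\gamma$) also matches the paper's.
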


\begin{proof}
    Using Lemma \ref{lem:3pd} with $x^+ = \pi_s^{t+1}$, $\mathcal{C}=\Delta(\A)$, $f(u)=\dotprod{\widehat{Q}_s^t,u}$, $x = \pi^t$ and $u=\pi^{t+1}$, we have that
    \begin{align*}
        \langle \eta_t\widehat{Q}_s^t, \pi^\star_s - \pi^{t+1}_s\rangle
        \leq \D_h(\pi^\star, \pi^t) - \D_h(\pi^\star, \pi^{t+1}) - \D_h(\pi^{t+1}, \pi^t),
    \end{align*}
    which can be decomposed as
    \begin{align*}
        \langle \eta_t\widehat{Q}_s^t, \pi^t_s - \pi^{t+1}_s\rangle + \langle \eta_t\widehat{Q}_s^t, \pi^\star_s - \pi^t_s\rangle
        \leq \D_h(\pi^\star, \pi^t) - \D_h(\pi^\star, \pi^{t+1}) - \D_h(\pi^{t+1}, \pi^t).
    \end{align*}
    Taking expectation with respect to the distribution $d_\mu^\star$ over states and with respect to the randomness of PMD and dividing both sides by $\eta_t$, we have
    \begin{equation}
        \label{eq:2}
        \E\left[\E_{s\sim d_\mu^\star}\left[\langle \widehat{Q}_s^t, \pi^t_s - \pi^{t+1}_s\rangle\right] \right]
        + \E\left[\E_{s\sim d_\mu^\star}\left[\langle \widehat{Q}_s^t, \pi^\star_s - \pi^t_s\rangle\right]\right]
        \leq \frac{1}{\eta_t}\E[\D^\star_t - \D^\star_{t+1}-\D^{t+1}_t].
    \end{equation}
    We lower bound the two terms on the left hand side of \eqref{eq:2} separately. For the first term, we have that
    \begin{align*}
        \E\bigg[\E_{s\sim d_\mu^\star}&\left[\langle \widehat{Q}^t_s,\pi^t_s - \pi^{t+1}_s\rangle\right]\bigg]
        \overset{(a)}{\geq} \frac{1}{1-\gamma}\E\left[\E_{s\sim d_{d^\star_\mu}^{t+1}}\left[\langle \widehat{Q}_s^t,\pi^t_s - \pi^{t+1}_s\rangle\right]\right]\\
        &= \frac{1}{1-\gamma}\E\left[\E_{s\sim d_{d^\star_\mu}^{t+1}}\left[\langle Q_s^t,\pi^t_s - \pi^{t+1}_s\rangle\right]\right]+ \frac{1}{1-\gamma}\E\left[\E_{s\sim d_{d^\star_\mu}^{t+1}}\left[\langle \widehat{Q}_s^t-Q_s^t,\pi^t_s - \pi^{t+1}_s\rangle\right]\right]\\
        &\overset{(b)}{=} \E\left[V^t({d^\star_\mu})-V^{t+1}({d^\star_\mu})\right]+ \frac{1}{1-\gamma}\E\left[\E_{s\sim d_{d^\star_\mu}^{t+1}}\left[\langle \widehat{Q}_s^t-Q_s^t,\pi^t_s - \pi^{t+1}_s\rangle\right]\right]\\
        &\geq\E\left[V^t({d^\star_\mu})-V^{t+1}({d^\star_\mu})\right]- \frac{1}{1-\gamma}\E\left[\E_{s\sim d_{d^\star_\mu}^{t+1}}\left[\norm{\widehat{Q}^t_s - Q^t_s}_\infty \norm{\pi^{t+1}_s - \pi^t_s}_1\right]\right]\\
        &\geq\E\left[V^t({d^\star_\mu})-V^{t+1}({d^\star_\mu})\right]- \frac{2}{1-\gamma}\E\left[\max_{s\in\S}\norm{\widehat{Q}^t_s - Q^t_s}_\infty\right]
    \end{align*}
    where $(a)$ follows from Lemmas \ref{lemma:monot} and the fact that $d_{d^\star_\mu}^{t+1}(s)\geq (1-\gamma)d_\mu^\star(s)\;\forall s\in\S$, and $(b)$ follows from \ref{lem:pdl}. For the second term, we have that
    \begin{eqnarray*}
        \E\left[\E_{s\sim d_\mu^\star}\left[\langle \widehat{Q}^t_s,\pi^\star_s - \pi^t_s\rangle\right]\right] 
        &=& \E\left[\E_{s\sim d_\mu^\star}\left[\langle Q^t_s,\pi^\star_s - \pi^t_s\rangle\right]\right]
        + \E\left[\E_{s\sim d_\mu^\star}\left[\langle \widehat{Q}^t_s - Q^t_s,\pi^\star_s - \pi^t_s\rangle\right]\right] \\ 
        &\overset{(b)}{=}& \E[V^\star(\mu) - V^t(\mu)](1-\gamma) + \E\left[\E_{s\sim d_\mu^\star}\left[\langle \widehat{Q}^t_s - Q^t_s,\pi^\star_s - \pi^t_s\rangle\right]\right],\\
        &\geq& \E[V^\star(\mu) - V^t(\mu)](1-\gamma) - \frac{2}{1-\gamma}\E\left[\max_{s\in\S}\norm{\widehat{Q}^t_s - Q^t_s}_\infty\right],
    \end{eqnarray*}
    where $(b)$ follows from Lemma \ref{lem:pdl}.

    \smallskip 
    
    Plugging the two bounds in \eqref{eq:2}, dividing both sides by $(1-\gamma)$ and rearranging, we obtain
    \begin{align*}
        \frac{\E[\D^{t+1}_t]}{\eta_t(1-\gamma)} 
        + \E[V^\star(\mu) - V^t(\mu)] &\leq \frac{\E[\D^\star_t - \D^\star_{t+1}]}{\eta_t(1-\gamma)}+\frac{\E\left[V^t({d^\star_\mu})-V^{t+1}({d^\star_\mu})\right]}{1-\gamma}\\&\quad+ \frac{4}{(1-\gamma)^2}\E\left[\max_{s\in\S}\norm{\widehat{Q}^t_s - Q^t_s}_\infty\right].
    \end{align*}
    Summing up from $0$ to $T-1$ and dropping some positive terms on the left hand side and some negative terms on the right hand side, we have   
    \begin{align*}
        TV^\star(\mu) - \sum_{t<T}\E[V^t(\mu)] \leq &\frac{\E[\D^\star_0]}{\eta_t(1-\gamma)} - \frac{\E\left[V^0({d^\star_\mu})-V^T({d^\star_\mu})\right]}{1-\gamma}\\ &+ \frac{4}{(1-\gamma)^2}\E\left[\max_{s\in\S}\norm{\widehat{Q}^t_s - Q^t_s}_\infty\right].
    \end{align*}
    Notice that $\E\left[V^0({d^\star_\mu})-V^T({d^\star_\mu})\right]\leq\frac{1}{1-\gamma}$ as $r(s,a) \in [0,1]$. By dividing $T$ on both side, we yield the statement.
\end{proof}

\section{Training details}
\label{app:hyp_pars}
We give the hyper-parameters we use for training in Tables \ref{tab:hyperparameters}, \ref{tab:hyperparameters_ampo} and \ref{tab:evo_hypers}. Hyper-parameter tuning was performed differently for each method. For Gridworld, we performed a grid search over the hyper-parameters and selected those that maximized the averaged performance of the negative entropy and $\ell_2$-norm over 256 randomly sampled Gridworld environments. For Basic Control Suite and MinAtar, we used the optuna library to search over the hyper-parameters to maximize the average performance of the negative entropy over all environments. We used ``CmaEsSampler'' as option for the sampler. Lastly, we used Weights and Biases to optimize the hyperparameters for MuJoCo in order to maximimize the average performance of the negative entropy over all environments.

\begin{table}[h]
\caption{Hyper-parameter settings of PMD for different sets of environments}
\label{tab:hyperparameters}
\vspace{0.25cm}
\centering
\begin{tabular}{lccc}
\multicolumn{1}{c}{Parameter} & Grid-World                                         & MuJoCo\\ \hline
Number of environment steps    & $2^{18}\simeq 2.5\text{e}5$ & 1e7      \\
Number of environments         & 64                                                 & 2048    \\
Unroll length                  & 32                                                 & 10  \\
Number of minibatches          & 1                                                  & 128    \\
Number of update epochs        & 32                                                 & 8   \\
Adam learning rate             & -                                                  & 1e-4 \\
Sgd learning rate              & 40                                                 & -    \\
Gamma                          & 0.99                                               & 0.99 \\
Max grad norm                  & -                                                  & 1.0  \\
$\eta$                         & 0.1                                                & 0.5    
\end{tabular}
\end{table}

\begin{table}[h]
\caption{Hyper-parameter settings of AMPO for different sets of environments}
\label{tab:hyperparameters_ampo}
\vspace{0.25cm}
\centering
\begin{tabular}{lccc}
\multicolumn{1}{c}{Parameter}  & BCS   & MinAtar \\ \hline
Number of environment steps    & 5e5  & 1e7     \\
Number of environments         & 4    & 256     \\
Unroll length                  & 128  & 128     \\
Number of minibatches          & 4    & 8       \\
Number of update epochs        & 16   & 8       \\
Adam learning rate             & 4e-3 & 7e-4    \\
Gamma                          & 0.99 & 0.99    \\
Max grad norm                  & 1.4  & 1       \\
AMPO learning rate             & 0.9  & 0.9    
\end{tabular}
\end{table}

\begin{table}[!h]
\caption{Hyper-parameter settings of OpenAI-ES, which we used in the tabular setting, and of Sep-CMA-ES, which we used in the non-tabular case.}
\label{tab:evo_hypers}
\vspace{0.25cm}
\centering
\begin{tabular}{lcc}
                      & OpenAI-ES & Sep-CMA-ES \\ \hline
Population Size       & 512       & 128        \\
Number of generations & 512 (256 for MuJoCo)       & 600        \\
Sigma init            & 0.5       & 2          \\
Sigma Decay           & 0.995     & -          \\
Learning rate         & 0.01      & -         
\end{tabular}
\end{table}

\newpage
\section{Computational costs of ES}
\label{app:comp_costs}
The computational costs associated with ES are inevitable in this field of research that focuses on automatically discovering optimizers. In the literature, the most popular methods to discover optimizers are meta-gradients \citep{oh2020discovering,jackson2023general} and ES \citep{lu2022discovered}. Discovering algorithms using meta-gradients consists in introducing some meta-parameters that influence the agent training procedure, training a batch of agents, and differentiating through the training procedure w.r.t.\ the meta-parameters to maximize the final performance of the agents. ES consists in running several training procedures in parallel and estimating the meta-gradient as in \eqref{eq:openes}, therefore avoiding the need for differentiation. This property is particularly desirable in settings, like ours, where the agent is updated many times and differentiating through the whole training procedure is unfeasible due to memory constraints, meaning that it is necessary to limit the differentiation to the final updates. We decided to employ ES due to this property and because it has been found to be more successful in discovering optimizers \citep{jackson2023discovering}. Additionally, our experiments are implemented in JAX, which through parallelism and just in time compilation renders the computational costs of ES feasible. 

\end{document}